\documentclass[journal,romanappendices]{IEEEtran}

\newif\ifieeesubmitted
\ieeesubmittedfalse

\usepackage{amsmath,amsfonts,amssymb,color}
\usepackage{amsthm}
\usepackage{algorithm}
\usepackage{algorithmic}
\usepackage{booktabs}
\usepackage{array}
\usepackage{graphicx}
\usepackage{cite}
\usepackage{url}
\usepackage[T1]{fontenc}
\usepackage{setspace}


\newtheorem{theorem}{Theorem}[section]
\newtheorem{lemma}[theorem]{Lemma}
\newtheorem{proposition}[theorem]{Proposition}
\newtheorem{corollary}[theorem]{Corollary}
\newtheorem{remark}[theorem]{Remark}
\newtheorem{assumption}[theorem]{Assumption}

\newcommand{\CVaR}{\operatorname{CVaR}}
\newcommand{\TV}{\operatorname{TV}}
\newcommand{\E}{\mathbb{E}}
\newcommand{\Pp}{\mathbb{P}}
\newcommand{\PP}{\mathbb{P}}
\newcommand{\QQ}{\mathbb{Q}}
\newcommand{\R}{\mathbb{R}}
\newcommand{\Ot}{\widetilde{\mathcal O}}

\newcommand{\cC}{\mathcal C}
\newcommand{\cP}{\mathcal P}

\newcommand{\mainref}[1]{\ref{#1}}
\newcommand{\mainequation}[1]{\eqref{#1}}
\DeclareMathOperator*{\argmin}{arg\,min}
\DeclareMathOperator*{\argmax}{arg\,max}

\begin{document}

\title{Robust Risk-Sensitive Reinforcement Learning from Corrupted Human Feedback}

\author{Xinyi~Ni and Lifeng~Lai%
\thanks{The authors are with the Department of Electrical and Computer Engineering, University of California, Davis, CA 95616, USA (e-mail: \{xni, lflai\}@ucdavis.edu). This work was supported in part by National Science Foundation under grants CCF-2232907 and ECCS-2448268. }%
\thanks{This work has been submitted to the IEEE for possible publication. Copyright may be transferred without notice, after which this version may no longer be accessible.}%
}

\maketitle

\begin{abstract}
	Reinforcement learning with human feedback (RLHF) learns from human comparisons, which can be corrupted or deliberately manipulated. This paper studies online risk-sensitive RLHF with static conditional value-at-risk (CVaR) under adversarial preference-label flips. We consider additive linear rewards and a fixed-reference protocol with one comparison per episode and at most $C$ flipped labels over $K$ episodes. We propose weighted streamed-preference CVaR RLHF (WSP-CVaR-RLHF), which combines uncertainty-weighted reward estimation with optimistic augmented-state CVaR planning. For known transitions and normalized rewards, we establish the regret bound
	\(
	\widetilde O\!\left(
	\frac{d}{\kappa}\sqrt{\frac{K}{\alpha}}
	+\frac{dC}{\kappa\alpha}
	\right),
	\)
	up to lower-order terms, where $d$ is the reward-feature dimension, $\alpha$ is the CVaR level, and $\kappa$ characterizes the preference link. The bound separates the clean statistical cost from the penalty caused by corrupted feedback. We further extend the analysis to unknown tabular transitions, \textcolor{black}{where the trajectory distribution entering the CVaR objective must be learned together with the reward. We address the resulting coupled uncertainty using rectangular transition confidence sets, joint optimistic planning, and a history-level CVaR simulation argument.} Experiments under four adversarial attacks demonstrate that WSP-CVaR-RLHF consistently reduces cumulative regret relative to its unweighted robust counterpart while preserving confidence-set coverage. 
\end{abstract}

\begin{IEEEkeywords}
RLHF, preference learning, conditional value-at-risk, adversarial corruption, uncertainty weighting.
\end{IEEEkeywords}

\section{Introduction}

Reinforcement learning with human feedback (RLHF) provides reward information when the objective of a sequential decision-making problem is difficult to specify numerically.  In online RLHF, an agent improves its policy while collecting pairwise comparisons between trajectories ~\cite{christiano2017deep,ibarz2018reward,zhu2023principled,wang2024rlhf,ye2024online}. The quality of the learned policy then depends on the reliability of these comparisons.  Noisy, biased, or manipulated labels can distort the inferred reward, and the RL solver may further reinforce this error when optimizing the policy~\cite{bukharin2024robust,mandal2024corruption,yang2025human}.

This issue becomes more consequential when the objective emphasizes rare poor outcomes. Standard RL maximizes the expected return, which may hide severe losses occurring on a small fraction of trajectories. Risk-sensitive RL instead evaluates the return distribution. In this paper, we focus on static conditional value-at-risk (CVaR): for $\alpha\in(0,1]$, lower-tail CVaR measures the average return among the worst $\alpha$ fraction of complete trajectories. CVaR has been widely studied in RL~\cite{chow2014algorithms,chow2015risk,tamar2015optimizing,tamar2015policy,wang2023near,ni2024risk}. Recent work has extended risk-sensitive learning to human feedback through iterated CVaR~\cite{chen2023provably}, static and nested quantile objectives~\cite{zhao2025ra}, and a clean preference-to-reward interface for static CVaR~\cite{ni2026isit}. However, these results assume clean stochastic comparisons and do not characterize how adversarial preference corruption propagates into the tail-risk objective.

Static CVaR creates a distinct amplification mechanism.  \textcolor{black}{A corrupted comparison changes the reward estimate, which can alter both the policy ranking and the trajectories that form the lower tail.}  Equivalently, the dual form of CVaR reweights lower-tail trajectories by as much as $1/\alpha$.  Therefore, an error concentrated on these trajectories can have a much larger effect than the same error under an expected-return objective.  Existing robust RLHF methods study expected or ideal comparison utility, offline reward recovery, or repeated-query defenses~\cite{bukharin2024robust,mandal2024corruption,yang2025human,multisourceimperf2025}. An offline distributional method uses CVaR to model stochastic reward uncertainty~\cite{xu2024uncertainty}, but does not consider adversarial label corruption or online regret.  To the best of our knowledge, there is no prior work that provides a cumulative-regret guarantee for robust risk-sensitive RLHF with adversarially corrupted preference feedback.

Motivated by the clean fixed-reference protocol of~\cite{ni2026isit}, we consider a streaming setting in which the learner receives one comparison per episode. Each executed trajectory occurrence is compared once with a fixed feasible reference, without deliberately collecting repeated labels for that occurrence. This setting captures online calibration against an established reference when repeated evaluation of the same trajectory occurrence is unavailable. We assume the unknown reward is additive and linear in known stage features, and the clean comparison follows a generalized linear preference model. After observing the clean label, an adversary may flip it, subject to a pathwise budget of at most $C$ flips over $K$ episodes. \textcolor{black}{The learner observes only the corrupted labels.}

\textbf{Our contribution}: \textcolor{black}{We aim to design robust risk-sensitive RLHF algorithms to mitigate potential impacts of adversarial manipulations. The main idea of our design is to control the influence of adversarial flips at the reward-estimation stage before the resulting uncertainty is propagated through CVaR planning. \textcolor{black}{Specifically, a corrupted label can have a large influence on reward estimation when its comparison feature lies in a poorly estimated direction. We therefore downweight comparisons with greater uncertainty to cap their self-normalized contributions to the estimating equation. The weights are determined before the current label is generated, and the resulting influence bound allows the aggregate effect of at most $C$ flips to be incorporated into the reward confidence radius.} Based on this idea, we propose \emph{WSP-CVaR-RLHF}, a weighted streamed-preference algorithm that couples the resulting reward confidence set with optimistic CVaR planning.} The uncertainty-weighted estimator is inspired by the clipped-confidence method of~\cite{di2025dueling}. Unlike their dueling-bandit analysis, our decision error must be propagated through the CVaR of the return distribution induced by the selected policy.

\textcolor{black}{We first derive a regret bound of WSP-CVaR-RLHF for the case with known transition kernels and normalized additive linear rewards.}
The leading terms in our regret bound are
\(\widetilde O\!\left( \underbrace{\frac{d}{\kappa}\sqrt{\frac{K}{\alpha}}}_{\text{clean statistical term}} + \underbrace{\frac{dC}{\kappa\alpha}}_{\text{corruption term}} \right).
\)
Here $K$ is the number of episodes, $C$ is the supplied upper bound on the realized flip budget, $\alpha$ is the risk level in CVaR, $d$ is the reward-feature dimension, and $\kappa$ characterizes the preference link. The different dependence on $\alpha$ follows from the CVaR envelope: it has total mass one, while its weight on an individual lower-tail trajectory can reach $1/\alpha$. \textcolor{black}{The corruption penalty grows linearly with $C$. When $C=0$, WSP uses unit weights, and our bound reduces to $\widetilde O\!\left(\frac{d}{\kappa}\sqrt{\frac{K}{\alpha}}\right)$ up to lower-order terms. At $\alpha=1$, static CVaR reduces to the expected return and the leading term has the same dependence on $d$, $\kappa$, and $K$ as the clean contextual dueling-bandit guarantee of~\cite{di2025dueling}.}

We further consider the unknown tabular transition-kernel scenario. 
For this case, our algorithm combines the reward confidence set with rectangular transition confidence sets, and a history-level CVaR simulation lemma separates reward-estimation error from transition-estimation error. Under the normalized reward scale, the resulting regret bound retains the known-transition reward terms and adds a transition-learning term of order $\widetilde O\!\left((H/\alpha)(SA+S\sqrt{AK})\right)$. Finally, experiments with both known- and unknown transition kernels under four adversarial feedback attacks demonstrate that WSP-CVaR-RLHF consistently reduces cumulative regret relative to its unweighted robust counterpart while preserving confidence-set coverage.

\textbf{Related work:} Existing \textit{risk-sensitive RLHF} methods study clean human feedback under iterated CVaR~\cite{chen2023provably}, static or nested quantile risk~\cite{zhao2025ra}, and a fixed-reference static-CVaR interface~\cite{ni2026isit}, or optimize risk-aware offline alignment objectives~\cite{zhang2025radpo,chittepu2026safe,horwitz2026risk}. None considers adversarially corrupted preference feedback. Compared with the closest static-CVaR framework~\cite{ni2026isit}, we study one comparison per episode under a pathwise flip budget and cumulative regret, rather than clean preference-to-reward sample and query complexity. For \textit{adversarial preference learning}, corrupted dueling-bandit methods~\cite{di2025dueling,cheng2025imperfect} optimize pairwise utility without an MDP-induced return distribution or a tail-risk objective. Adversarial preferences in tabular MDPs~\cite{tsuchiya2025adversarial} use a varying Borda-score objective, while online adversarial RLHF~\cite{yang2025human} uses expected return and repeated queries under a per-trajectory attack budget. Multi-source mismatch~\cite{multisourceimperf2025} and Bayesian best-policy identification~\cite{agnihotri2026best} also differ from the present single-source realized-flip model. Our analysis instead propagates reward uncertainty through static CVaR, whose lower-tail envelope produces different statistical and corruption dependence on $\alpha$. Robust learning from corrupted offline preferences is studied in~\cite{bukharin2024robust,mandal2024corruption,prefflip2025,chen2025cov}, while adversarial online CVaR control with numerical losses is considered in~\cite{chen2026adversarialcvar}. These works do not address cumulative static-CVaR regret for robust risk-sensitive RLHF with an online stream of corrupted feedback.

\textbf{Outline:} The remainder of this paper is organized as follows. Section~\ref{sec:prelim} reviews episodic MDPs and static CVaR. Section~\ref{sec:model} formulates the preference-feedback model, corruption protocol, and problem statement. Section~\ref{sec:algorithm} presents WSP-CVaR-RLHF. \textcolor{black}{Sections~\ref{sec:main} and~\ref{sec:unknown} establish the regret guarantees for known and unknown transitions, respectively.} Section~\ref{sec:experiments} presents the numerical results, and Section~\ref{sec:conclusion} offers concluding remarks.
\section{Preliminaries}
\label{sec:prelim}

\subsection{Episodic Markov Decision Processes}

We consider an episodic MDP $(\mathcal S,\mathcal A,H,K,\mathbb P^\star,r^\star)$, where $|\mathcal S|=S$, $|\mathcal A|=A$, $H$ is the episode length, $K$ is the number of episodes, $\mathbb P_h^\star(\cdot\mid s,a)$ is the transition kernel at step $h$, and \textcolor{black}{$r^\star$ is the unknown deterministic reward model specified in Section~\ref{sec:model}.} Let $\Pi_{\mathcal H}$ be the class of history-dependent policies. Each $\pi\in\Pi_{\mathcal H}$ consists of decision rules $\{\pi_h\}_{h=1}^H$, where $\pi_h:\mathcal S\times\mathcal H_h\to\Delta(\mathcal A)$, $\mathcal H_h=(\mathcal S\times\mathcal A)^{h-1}$ denotes the history before the current state at step $h$, and $\Delta(\mathcal A)$ is the probability simplex over $\mathcal A$. Every episode starts from a fixed known state $s_1$. At step $h\in[H]:=\{1,\ldots,H\}$, the learner selects $a_h$ and observes
$
s_{h+1}\sim\mathbb P_h^\star(\cdot\mid s_h,a_h).
$
The resulting trajectory is
$
\tau=(s_1,a_1,s_2,a_2,\ldots,s_H,a_H,s_{H+1}).
$
We use $\mathcal T$ to denote the set of all syntactically admissible length-$H$ trajectories.

\subsection{Static CVaR}
We adopt conditional value-at-risk (CVaR), a coherent and widely used risk measure in risk-sensitive RL~\cite{prashanth2014policy,chow2014algorithms,tamar2015optimizing,tamar2015policy,du2022provably,ni2024risk,wang2023near,zhang2024cvar}. For a random variable $X$ and risk level $\alpha\in(0,1]$,
\begin{equation}
\CVaR_\alpha(X)
=\sup_{b\in\R}\left\{b-\frac1\alpha\E[(b-X)^+]\right\}, 
\label{eq:cvar}
\end{equation}
where $(x)^+=\max\{x,0\}$. As a coherent risk measure, CVaR is monotone, translation equivariant, positively homogeneous, and superadditive~\cite{rockafellar2000optimization,artzner1999coherent,acerbi2002coherence}. It also admits the risk-envelope~\cite{ni2024robust}
\begin{equation}
\CVaR_\alpha(X)
=\inf_{q:\,0\le q\le1/\alpha,\,\E[q]=1}\E[qX].
\label{eq:dual}
\end{equation}
\section{Model and Problem Formulation}
\label{sec:model}

We now specify the reward model, corrupted-feedback protocol, and static-CVaR objective. The learner does not observe numerical rewards during interaction and receives reward information only through trajectory comparisons.

\subsection{Reward Model and Preference Feedback}
\label{sec:preference-model}

We use the standard random-utility comparison model~\cite{christiano2017deep,zhan2023provable,wang2024rlhf}: before corruption, a comparison between trajectories $\tau$ and $\tau'$ is Bernoulli with mean \(\sigma\!\left(r^\star(\tau)-r^\star(\tau')\right),\) where $\sigma$ is a known link function and $r^\star$ is the underlying reward that guides human preferences.

\ifdefined\submissionversion
\begin{figure}[t]
\centering
\includegraphics[width=0.60\linewidth]{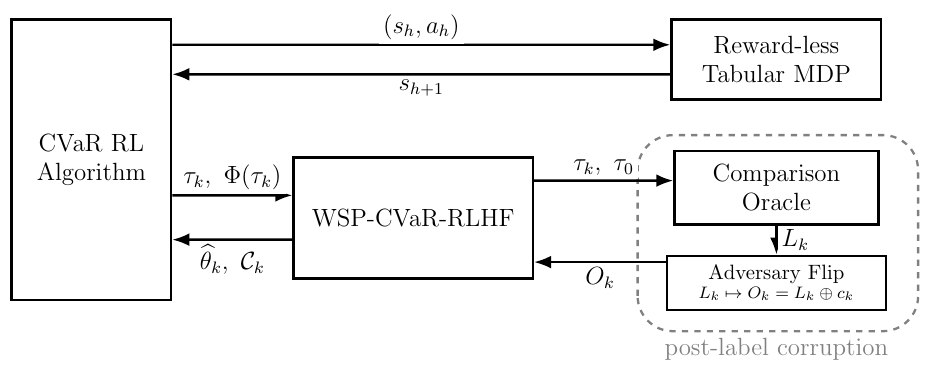}
\else
\begin{figure*}[t]
\centering
\includegraphics[width=0.52\textwidth]{figures/wsp_cvar_rlhf_protocol.pdf}
\fi
\caption{Interaction protocol for WSP-CVaR-RLHF. The weighted preference estimator processes possibly flipped comparison labels and supplies a reward estimate and confidence set to the static-CVaR planner.}
\label{fig:protocol}
\ifdefined\submissionversion
\end{figure}
\else
\end{figure*}
\fi

For the underlying reward, we adopt the commonly used additive linear reward model~\cite{sadigh2017active,biyik2018batch,jin2020provably,chen2023provably,wang2024rlhf,di2025dueling}.
\begin{assumption}[Additive linear reward]
	\label{ass:model}
	For each $h\in[H]$, let
	$\phi_h:\mathcal S\times\mathcal A\to\R^d$ be a known feature map,
	and let $\Theta\subset\R^d$ be a known compact convex parameter set.
	For every $\theta\in\Theta$, define
	\begin{equation}
		r_{\theta,h}(s,a)=\langle\theta,\phi_h(s,a)\rangle,
		\qquad
		r_\theta(\tau)=\sum_{h=1}^H r_{\theta,h}(s_h,a_h).
		\label{eq:linear-reward}
	\end{equation}
		The underlying true reward that generates the human comparisons is
		realizable in this class: there exists $\theta^\star\in\Theta$ such that
		$r^\star(\tau)=r_{\theta^\star}(\tau)$ for every $\tau\in\mathcal T$.
	
	Moreover, there is a known constant $B>0$ such that
	$\|\theta\|_2\le B$ for every $\theta\in\Theta$.
	Fix a feasible reference trajectory $\tau_0$ and define
	\begin{equation}
		z(\tau)=\sum_{h=1}^H
		\bigl(\phi_h(s_h,a_h)-\phi_h(s_{0,h},a_{0,h})\bigr).
		\label{eq:centered-feature}
	\end{equation}
	For every $\tau\in\mathcal T$, $\|z(\tau)\|_2\le1$.
\end{assumption}

\begin{remark}
	Every deterministic stage-dependent tabular reward is represented, up to the normalization below, by taking $d=HSA$ and using one-hot features indexed by $(h,s,a)$.
	The normalization $\sup_{\tau\in\mathcal T}\|z(\tau)\|_2\le1$ is without loss of generality because any finite feature radius can be absorbed into the parameter norm bound. Thus, $B=O(1)$ in the simplified rates refers to normalized whole-trajectory rewards and need not hold under per-stage normalization.
\end{remark}

We further define the reference-centered stage and trajectory rewards by
\begin{align}
	r_{\theta,h}^{\rm ctr}(s,a)
	&:=\langle\theta,\phi_h(s,a)-\phi_h(s_{0,h},a_{0,h})\rangle,
	\label{eq:centered-stage-reward}\\
	r_{\theta,\tau_0}(\tau)
	&:=\sum_{h=1}^H r_{\theta,h}^{\rm ctr}(s_h,a_h)
	=\langle\theta,z(\tau)\rangle.
	\label{eq:centered-reward}
\end{align}

In episode $k$, the executed trajectory $\tau_k$ is compared with a fixed reference
trajectory $\tau_0$, and the clean label satisfies
	\begin{equation}
	L_k\mid\tau_k\sim\operatorname{Ber}\!\left(\sigma(z(\tau_k)^\top\theta^\star)\right).
	\label{eq:clean-label}
	\end{equation}
Here $L_k=1$ means that $\tau_k$ is preferred to $\tau_0$. Unlike the clean CVaR-HF-P2R protocol in~\cite{ni2026isit}, the observed bit below may be flipped and only one comparison is requested in each episode.

\begin{assumption}[Preference link]
	\label{ass:link}
	The known link $\sigma:\R\to[0,1]$ is differentiable and strictly increasing, satisfies $\sigma(0)=1/2$ and $\sigma(x)+\sigma(-x)=1$, and obeys
	\begin{equation}
		0<\kappa\le\sigma'(z(\tau)^\top\theta)\le1
		\label{eq:link-curvature}
	\end{equation}
	for every $\tau\in\mathcal T$ and $\theta\in\Theta$.
\end{assumption}

The symmetry condition gives order-consistent comparisons. The derivative lower bound is the standard local identifiability condition used in generalized linear bandits and preference-based RLHF~\cite{filippi2010parametric,li2017provably,chen2023provably,wang2024rlhf}. It is required only on the attainable score set $\{z(\tau)^\top\theta:\tau\in\mathcal T,\theta\in\Theta\}$. For the logistic link, this condition holds whenever the attainable score set is bounded.

\subsection{Adversarial Post-Label Corruption}

After the clean label is generated, a strong adversary observes $L_k$ and chooses a flip indicator $c_k\in\{0,1\}$. The observed label $O_k$ of the learner is defined as:
\begin{equation}
	O_k=L_k\oplus c_k,
	\qquad
	\sum_{k=1}^Kc_k\le C
	\quad\text{pathwise},
	\label{eq:corruption-model}
\end{equation}
where $\oplus$ denotes exclusive OR. The learner is given an integer upper bound $C\in\{0,\ldots,K\}$ on the realized flip budget. Because the protocol requests only one label for each executed trajectory occurrence, the learner cannot remove corruption by deliberately repeating that query.

\textbf{Information structure:} For each episode $k\in[K]$, let $\mathcal F_{k-1}$ denote the learner's observable history before the episode and let $\mathcal M_{k-1}$ be the corresponding master history, which additionally contains previous latent clean labels, flip indicators, and adversary randomization. The selected physical policy is $\mathcal F_{k-1}$-measurable. After the trajectory is observed, the weight $w_k$ is fixed before the clean label $L_k$ is generated and before the adversary chooses $c_k$. We adopt the fresh-randomness convention: conditional on $\mathcal F_{k-1}$ and the selected policy, the episode-$k$ transition innovations are independent of $\mathcal M_{k-1}$, and $L_k$ is generated from fresh Bernoulli randomness conditional on the resulting trajectory. Consequently, the conditional trajectory law given $\mathcal M_{k-1}$ equals that given $\mathcal F_{k-1}$. Appendix~\ref{app:filtration} provides the formal construction.

\subsection{Static-CVaR Objective and Augmented-State Policies}

For a transition kernel $\PP$, reward parameter $\theta$, and any fixed physical policy $\pi\in\Pi_{\mathcal H}$, define
\begin{equation}
J_{\PP,\theta}(\pi)
=\CVaR_\alpha^{\PP,\pi}(r_{\theta,\tau_0}(\tau)).
\label{eq:policy-objective}
\end{equation}

Assumption~\ref{ass:model} implies $|r_{\theta,\tau_0}(\tau)|\le B$, so an optimal threshold may be restricted to $[-B,B]$~\cite{ni2024risk}. Following~\cite{bauerle2011markov}, additive rewards allow CVaR optimization on the augmented state space $\mathcal S^{\rm Aug}=\mathcal S\times\R$. Let $\Pi^{\rm Aug}$ denote the class of deterministic Markov policies $\rho=\{\rho_h\}_{h=1}^H$, where $\rho_h:\mathcal S\times\mathbb R\to\mathcal A$. Starting from $(s_1,b_1)$ with $b_1\in[-B,B]$, the agent chooses $a_h=\rho_h(s_h,b_h)$, transitions according to $\PP_h(\cdot\mid s_h,a_h)$, and updates
\(
b_{h+1}=b_h-r_{\theta,h}^{\rm ctr}(s_h,a_h).
\)
For $\rho\in\Pi^{\rm Aug}$, define the expected terminal shortfall from augmented state $(s_h,b_h)$ by
\begin{equation}
V_{h,\PP,\theta}^{\rho}(s_h,b_h)
=\E_{\PP,\rho}\!\left[
\left(b_h-\sum_{t=h}^H r_{\theta,t}^{\rm ctr}(s_t,a_t)\right)^+
\middle|s_h,b_h
\right].
\label{eq:augmented-value}
\end{equation}
For every fixed initial threshold $b_1$, the expected terminal shortfall is linear in each randomized action distribution. Hence randomization cannot improve the minimum, and standard backward induction attains the optimum with a deterministic Markov policy in $\Pi^{\rm Aug}$. Consequently, the optimal CVaR value satisfies
\begin{equation}
\ifdefined\submissionversion
	J_{\PP,\theta}^\star:=\sup_{\pi\in\Pi_{\mathcal H}}J_{\PP,\theta}(\pi)=\max_{b_1\in[-B,B]}\left\{b_1-\frac1\alpha\min_{\rho\in\Pi^{\rm Aug}}V_{1,\PP,\theta}^{\rho}(s_1,b_1)\right\}.
	\else
	\begin{aligned}
	J_{\PP,\theta}^\star
	&:=\sup_{\pi\in\Pi_{\mathcal H}}J_{\PP,\theta}(\pi)\\
&=\max_{b_1\in[-B,B]}
\left\{b_1-\frac1\alpha
\min_{\rho\in\Pi^{\rm Aug}}
	V_{1,\PP,\theta}^{\rho}(s_1,b_1)\right\}.
	\end{aligned}
	\fi
	\label{eq:augmented-cvar}
\end{equation}
For any $\vartheta\in\Theta$, $b\in[-B,B]$, and $\rho\in\Pi^{\rm Aug}$, let $\pi^{\rho,b,\vartheta}\in\Pi_{\mathcal H}$ denote the induced physical policy. Specifically,
\begin{equation}
\pi_h^{\rho,b,\vartheta}(\cdot\mid s_h,\mathcal H_h)
=\delta_{\rho_h(s_h,b_h^{\vartheta})},
\qquad
b_h^{\vartheta}=b-\sum_{t=1}^{h-1}r_{\vartheta,t}^{\rm ctr}(s_t,a_t),
\label{eq:induced-policy}
\end{equation}
where $\delta_a$ is the point mass at action $a$. Thus, $\rho$ is Markov on the augmented state, while $\pi^{\rho,b,\vartheta}$ is the corresponding executable policy on physical histories. For fixed $(\rho,b,\vartheta)$, the trajectory law of $\pi^{\rho,b,\vartheta}$ remains unchanged when the evaluation parameter in $J_{\PP,\theta}$ changes. In particular, if $(b^\star,\rho^\star)$ attains Eq.~\eqref{eq:augmented-cvar}, then $J_{\PP,\theta}^\star=J_{\PP,\theta}(\pi^{\rho^\star,b^\star,\theta})$.

\subsection{Regret Formulation}
\label{sec:regret-definition}

Since the reference trajectory $\tau_0$ is fixed before learning and reused in every comparison, $r_\theta(\tau_0)$ is constant across policies for every fixed $\theta$. Thus, translation equivariance of static CVaR yields, for every policy $\pi$, \(\CVaR_\alpha^\pi(r_{\theta,\tau_0})=\CVaR_\alpha^\pi(r_\theta)-r_\theta(\tau_0).\) Therefore, for any two policies $\pi$ and $\pi'$, the policy-value difference satisfies
\ifdefined\submissionversion
\[
\CVaR_\alpha^\pi(r_{\theta,\tau_0})-\CVaR_\alpha^{\pi'}(r_{\theta,\tau_0})=\CVaR_\alpha^\pi(r_\theta)-\CVaR_\alpha^{\pi'}(r_\theta).
\]
\else
\[
\begin{aligned}
	&\CVaR_\alpha^\pi(r_{\theta,\tau_0})
	-\CVaR_\alpha^{\pi'}(r_{\theta,\tau_0})\\
	&\qquad=\CVaR_\alpha^\pi(r_\theta)-\CVaR_\alpha^{\pi'}(r_\theta).
\end{aligned}
\]
\fi

\textcolor{black}{In each episode $k\in[K]$, a learning algorithm uses the observable history of possibly corrupted comparisons and past transitions to select a physical policy $\pi_k\in\Pi_{\mathcal H}$. The objective is to learn from this corrupted feedback while minimizing the performance loss relative to an optimal policy under the true reward and transition kernel.}
The static-CVaR regret over $K$ episodes is
\begin{equation}
\operatorname{Regret}(K)
=\sum_{k=1}^K
\left[J_{\PP^\star,\theta^\star}^\star
-J_{\PP^\star,\theta^\star}(\pi_k)\right].
\label{eq:regret}
\end{equation}
By Eq.~\eqref{eq:centered-reward} and translation equivariance, this is also the static-CVaR regret under true reward $r^\star$. 
\section{WSP-CVaR-RLHF Algorithm}
\label{sec:algorithm}
In this section, we introduce the proposed WSP-CVaR-RLHF algorithm. Before explaining detailed steps of
\textcolor{black}{Algorithm~\ref{alg:weighted} in the following subsections, we first summarize major steps of WSP-CVaR-RLHF:  1) In each episode, the method first constructs a reward confidence set from the weighted comparison history, then uses this set to select and execute an optimistic static-CVaR policy (Lines~\ref{line:wsp-estimation}--\ref{line:wsp-execution}); 2) Once the trajectory is observed, it assigns a predictable uncertainty weight before receiving the possibly flipped label and updating the design matrix (Lines~\ref{line:wsp-weighting}--\ref{line:wsp-update}). This order fixes the weight before the current label is generated and limits the influence of corrupted comparisons.} 

\begin{algorithm}[H]
	\caption{WSP-CVaR-RLHF}
	\label{alg:weighted}
	\begin{algorithmic}[1]
		\STATE \textbf{Input:} the model and feedback protocol in Section~\ref{sec:model}; \textcolor{black}{known $\PP^\star$;} ridge $\lambda>0$, clipping threshold $\chi>0$ (for $C>0$), confidence level $\delta_{\rm cs}\in(0,1)$, and risk level $\alpha\in(0,1]$.
		\STATE \textbf{Initialization}: $\Sigma_1=\lambda I_d$ and an empty preference stream.
		\FOR{$k=1,\ldots,K$}
		\STATE \label{line:wsp-estimation}\textbf{Weighted preference estimation:} compute
		$\widehat\theta_k$, $\beta_k$, and $\cC_k$ from
		Eqs.~\eqref{eq:mest}--\eqref{eq:conf}.
		\STATE \label{line:wsp-planning}\textbf{CVaR planning:} solve Eq.~\eqref{eq:knownoracle} for
		$(\widetilde\theta_k,\widetilde b_k,\rho_k)$.
		\STATE \label{line:wsp-execution}Form and execute $\pi_k=\pi^{\rho_k,\widetilde b_k,\widetilde\theta_k}$, and observe trajectory $\tau_k$ and feature
		$z_k=z(\tau_k)$.
		\STATE \label{line:wsp-weighting}\textbf{Uncertainty weighting:} set
		$u_k=\|z_k\|_{\Sigma_k^{-1}}$ and
		choose $w_k$ by Eq.~\eqref{eq:weight}; in particular, $w_k=1$ when $C=0$.
		\STATE \label{line:wsp-feedback}Receive the corrupted comparison bit $O_k$ generated by the protocol in
		Fig.~\ref{fig:protocol}. \textcolor{black}{Append $(z_k,O_k,w_k)$ to the preference stream.}
		\STATE \label{line:wsp-update}\textbf{Design update:} update
		$\Sigma_{k+1}=\Sigma_k+\kappa w_k z_kz_k^\top$.
		\ENDFOR
	\end{algorithmic}
\end{algorithm}

\subsection{Why Weight the Comparisons?}

A corrupted label has the greatest influence when its feature direction is poorly estimated.  WSP-CVaR-RLHF controls this influence in the local geometry of the reward estimator.  Fix a ridge parameter $\lambda>0$ and a clipping threshold $\chi>0$ before interaction. The clipping threshold may depend on the disclosed problem parameters but not on the observed data.  For each episode $k\in[K]$, write $z_k:=z(\tau_k)$ for its comparison feature, let $I_d$ denote the $d\times d$ identity matrix, and define the weighted design matrix \(\Sigma_k:=\lambda I_d+\kappa\sum_{i<k}w_i z_i z_i^\top.\)

Before observing the comparison label, the algorithm computes the uncertainty score
$u_k=\|z_k\|_{\Sigma_k^{-1}}.$
If $u_k$ is large, the comparison is clipped by the weight
\begin{equation}
w_k=
\begin{cases}
1,&C=0\text{ or }u_k=0,\\
\min\{1,\chi/u_k\},&C>0\text{ and }u_k>0.
\end{cases}
\label{eq:weight}
\end{equation}
For $C>0$, this ensures the pathwise inequality
\(
w_k\|z_k\|_{\Sigma_k^{-1}}\le \chi.
\)
Therefore, at most $C$ flipped labels can contribute $\chi C$ to the confidence radius. For $C=0$, the algorithm uses the explicit clean branch $w_k=1$ and does not introduce artificial clipping. The weighted estimator, clipping rule, and pathwise score control are adapted from~\cite{di2025dueling}. Related corruption-budget inflations appear in corruption-robust bandit and RL analyses~\cite{lykouris2021corruption,wei2022model}.

\subsection{Estimator and Confidence Set}

Let $\Psi$ be an antiderivative of the preference link function $\sigma$, so that $\Psi'=\sigma$.  At the beginning of episode $k$, define the constrained weighted M-estimator
\begin{align}
\widehat\theta_k\in\argmin_{\theta\in\Theta}\;&
\frac{\lambda}{2}\|\theta\|_2^2
+\sum_{i<k}w_i
\left[\Psi(z_i^\top\theta)-O_i z_i^\top\theta\right].
\label{eq:mest}
\end{align}
For the logistic link, this objective equals the regularized weighted Bernoulli negative log-likelihood. For a general increasing link, its gradient is $\sum_{i<k}w_i(\sigma(z_i^\top\theta)-O_i)z_i+\lambda\theta$. The confidence analysis uses this gradient together with the constrained first-order variational inequality, monotonicity of $\sigma$, and the lower derivative bound. Therefore, a Bernoulli likelihood interpretation is not required outside canonical links. This construction follows the general approach to generalized linear bandit and preference-based confidence sets~\cite{filippi2010parametric,li2017provably,chen2023provably,zhu2023principled}.

For a confidence-set failure probability $\delta_{\rm cs}\in(0,1)$, the confidence radius and set are
\begin{align}
\beta_k={}&\sqrt\lambda B+\chi C+
\frac{1}{\sqrt\kappa}
\sqrt{2\log\frac{\det(\Sigma_k)^{1/2}}
{\det(\lambda I_d)^{1/2}\delta_{\rm cs}}},
\label{eq:radius}\\
\cC_k={}&
\left\{\theta\in\Theta:
\|\theta-\widehat\theta_k\|_{\Sigma_k}\le \beta_k\right\}.
\label{eq:conf}
\end{align}
The three terms in $\beta_k$ account for ridge bias, adversarial corruption, and stochastic estimation error. The final term follows by applying the self-normalized concentration theorem of Abbasi-Yadkori \emph{et al.}~\cite[Theorem~1]{abbasi2011improved} to a rescaled predictable design; the corruption term follows pathwise from the clipping rule.

\subsection{\textcolor{black}{Optimistic CVaR Planning}}

When transitions are known, the algorithm combines the confidence set with the augmented-state CVaR formulation in Eq.~\eqref{eq:augmented-cvar}. It selects
\begin{equation}
\ifdefined\submissionversion
(\widetilde\theta_k,\widetilde b_k,\rho_k)\in\argmax_{\theta\in\cC_k,\,b\in[-B,B],\,\rho\in\Pi^{\rm Aug}}\left\{b-\frac1\alpha V_{1,\PP^\star,\theta}^{\rho}(s_1,b)\right\},
\else
(\widetilde\theta_k,\widetilde b_k,\rho_k)
\in\argmax_{\substack{\theta\in\cC_k,\,b\in[-B,B],\\
\rho\in\Pi^{\rm Aug}}}
\left\{b-\frac1\alpha V_{1,\PP^\star,\theta}^{\rho}(s_1,b)\right\},
\fi
\label{eq:knownoracle}
\end{equation}
with fixed measurable tie-breaking. The selected triple induces the physical policy
\begin{equation}
\pi_k:=\pi^{\rho_k,\widetilde b_k,\widetilde\theta_k},
\label{eq:episode-policy}
\end{equation}
which the learner executes. Thus, the internal budget starts from $\widetilde b_k$ and is updated using $\widetilde\theta_k$, while $\rho_k$ remains the augmented-state Markov rule. Since $\widehat\theta_k\in\cC_k$, the confidence set is nonempty, and the resulting trajectory produces the next preference query.

\textcolor{black}{For each fixed reward parameter and initial threshold, the Bellman recursion in Eq.~\eqref{eq:oracle-dp}, with $\PP=\PP^\star$, computes the minimum expected terminal shortfall using the centered stage rewards in Eq.~\eqref{eq:centered-stage-reward}. The outer maximization in Eq.~\eqref{eq:knownoracle} selects the reward parameter and threshold. The analysis assumes an exact planning oracle; Proposition~\ref{prop:oracle-wellposed} establishes attainment and measurable selection. For the finite known-transition benchmark in Section~\ref{sec:experiments}, this oracle is implemented by enumerating policies and extreme lower-tail envelopes and solving the remaining two-dimensional optimization geometrically.}
\section{\textcolor{black}{Regret Analysis with Known Transitions}}\label{sec:main}
\label{sec:known}
\textcolor{black}{We now provide the theoretical guarantees for WSP-CVaR-RLHF with known transition kernels.} The proof proceeds from reward confidence, to a one-episode CVaR bound, and finally to a tail-weighted summation over episodes. \textcolor{black}{Detailed proofs are provided in Appendix~\ref{app:known-proofs}.}
\begin{lemma}[Weighted confidence sequence]
\label{lem:confidence}
For every $\delta_{\rm cs}\in(0,1)$, with probability at least $1-\delta_{\rm cs}$, $\theta^\star\in\cC_k$ simultaneously for every $k\le K$.
\end{lemma}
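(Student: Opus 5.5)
The plan is to prove a deterministic inequality from the first-order optimality of the constrained M-estimator, split the residual into clean noise and corruption, and control each term separately.

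\textbf{Step 1 (basic inequality).} Write the gradient of the objective in Eq.~\eqref{eq:mest} as
\[
G_k(\theta)=\lambda\theta+\sum_{i<k}w_i\bigl(\sigma(z_i^\top\theta)-O_i\bigr)z_i .
\]
Since $\widehat\theta_k$ minimizes a convex function over the convex set $\Theta$ and $\theta^\star\in\Theta$, the variational inequality gives $\langle G_k(\widehat\theta_k),\theta^\star-\widehat\theta_k\rangle\ge0$. By the mean value theorem, $\sigma(z_i^\top\widehat\theta_k)-\sigma(z_i^\top\theta^\star)=\sigma'(\xi_i)\,z_i^\top(\widehat\theta_k-\theta^\star)$. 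Here $\xi_i$ lies on the segment between $z_i^\top\theta^\star$ and $z_i^\top\widehat\theta_k$; both endpoints are attainable scores, and convexity of $\Theta$ puts $\xi_i$ in the attainable score set, so $\sigma'(\xi_i)\ge\kappa$. With $\Delta=\widehat\theta_k-\theta^\star$, this yields $\langle G_k(\widehat\theta_k)-G_k(\theta^\star),\Delta\rangle\ge\|\Delta\|_{\Sigma_k}^2$, hence $\|\Delta\|_{\Sigma_k}^2\le\langle -G_k(\theta^\star),\Delta\rangle$ and
\[
\|\Delta\|_{\Sigma_k}\le\|G_k(\theta^\star)\|_{\Sigma_k^{-1}}.
\]

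\textbf{Step 2 (decomposition).} Write $O_i=L_i+\epsilon_i$, where $\epsilon_i=c_i(1-2L_i)\in\{-1,0,1\}$, and set $\eta_i=L_i-\sigma(z_i^\top\theta^\star)$. Then
\[
G_k(\theta^\star)=\lambda\theta^\star-\sum_{i<k}w_i\eta_iz_i-\sum_{i<k}w_i\epsilon_iz_i .
\]
The triangle inequality bounds the three pieces as follows.
\begin{itemize}
\item The ridge piece satisfies $\|\lambda\theta^\star\|_{\Sigma_k^{-1}}\le\sqrt\lambda B$, because $\Sigma_k\succeq\lambda I_d$.
\item The corruption piece is at most $\sum_{i<k}c_i\,w_i\|z_i\|_{\Sigma_k^{-1}}\le\sum_i c_i\,w_i\|z_i\|_{\Sigma_i^{-1}}\le\chi C$, using $\Sigma_k\succeq\Sigma_i$ for $i<k$ and the clipping guarantee $w_i\|z_i\|_{\Sigma_i^{-1}}\le\chi$.
\item If $C=0$, the corruption piece vanishes identically, so no clipping is needed.
\end{itemize}

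\textbf{Step 3 (stochastic term).} This is the step needing care. Rescale via $x_i=\sqrt{\kappa w_i}\,z_i$ and $\tilde\eta_i=\sqrt{w_i/\kappa}\,\eta_i$, so that $\sum_iw_i\eta_iz_i=\sum_i\tilde\eta_ix_i$ and $\Sigma_k=\lambda I_d+\sum_{i<k}x_ix_i^\top$.

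Predictability is the key point. $w_i$ and $z_i$ are fixed before $L_i$ is drawn, and by the fresh-randomness convention in Appendix~\ref{app:filtration}, $\eta_i$ is conditionally mean zero given the master history $\mathcal M_{i-1}$ together with $(\tau_i,w_i)$. This holds even though the adversary's past flips have influenced the data.

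Since $\eta_i\in[-1,1]$, it is $1/2$-sub-Gaussian, and $\tilde\eta_i$ is $\sqrt{w_i/\kappa}/2\le 1/\sqrt{\kappa}$-sub-Gaussian. Rather than carry a varying scale, I would use the conservative uniform proxy $R=1/\sqrt\kappa$.

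Applying \cite[Theorem~1]{abbasi2011improved} with this filtration gives, with probability at least $1-\delta_{\rm cs}$ and simultaneously for all $k$,
\[
\Bigl\|\sum_{i<k}\tilde\eta_ix_i\Bigr\|_{\Sigma_k^{-1}}\le\frac1{\sqrt\kappa}\sqrt{2\log\frac{\det(\Sigma_k)^{1/2}}{\det(\lambda I_d)^{1/2}\delta_{\rm cs}}} .
\]

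The main obstacle is verifying the measurability and filtration requirements. We need $x_i$ measurable with respect to the filtration before $\eta_i$, and $\eta_i$ conditionally sub-Gaussian, all while the adversary acts adaptively. This is exactly what the ordering in Algorithm~\ref{alg:weighted} and the master-history construction supply.

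Summing the three bounds from Steps 2 and 3 gives $\|\widehat\theta_k-\theta^\star\|_{\Sigma_k}\le\beta_k$ for all $k\le K$ on this single event. Since $\theta^\star\in\Theta$, it follows that $\theta^\star\in\cC_k$ for every $k\le K$, which is the claim of Lemma~\ref{lem:confidence}.
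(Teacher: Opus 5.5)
Your proposal is correct and follows essentially the same route as the paper. Both use constrained first-order optimality together with the curvature bound $\sigma'\ge\kappa$ on the segment in $\Theta$ (you apply a per-sample mean value theorem where the paper integrates the Hessian) to reduce the claim to $\|\nabla\mathcal L_k(\theta^\star)\|_{\Sigma_k^{-1}}$; this is then split into the ridge term $\sqrt\lambda B$, the pathwise clipped corruption term $\chi C$, and the rescaled clean martingale term, which is handled by \cite[Theorem~1]{abbasi2011improved} on the nested pre-label filtration $\mathcal G_i=\mathcal H_{i+1}^{\rm pre}$, exactly as in the paper.
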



Lemma~\ref{lem:confidence} guarantees that the confidence sets remain valid under adaptive post-label corruption. In particular, the clipping rule gives the pathwise bound \(\left\|\sum_{i<k}w_i(O_i-L_i)z_i\right\|_{\Sigma_k^{-1}} \le \chi C.\)

\begin{lemma}[Risk-envelope interface]
\label{lem:envelope}
For each episode $k\in[K]$, let
\(
\Delta_k=J_{\PP^\star,\theta^\star}^\star-J_{\PP^\star,\theta^\star}(\pi_k)
\)
be the one-episode regret.  On the confidence event, define
\(
 W_k(\tau)= \sup_{\theta,\theta'\in\cC_k} |z(\tau)^\top(\theta-\theta')|.
\)
There is a fixed, measurably tie-broken envelope $q_k$ for the true return under $\pi_k$ such that $0\le q_k\le1/\alpha$, $\E[q_k(\tau_k)\mid\mathcal M_{k-1}] =\E[q_k(\tau_k)\mid\mathcal F_{k-1}]=1$, and
\(
 \Delta_k\le \E[q_k(\tau_k)W_k(\tau_k)\mid\mathcal F_{k-1}] =\E[q_k(\tau_k)W_k(\tau_k)\mid\mathcal M_{k-1}].
\)
\end{lemma}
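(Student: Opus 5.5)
Work on the confidence event of Lemma~\ref{lem:confidence}, so that $\theta^\star\in\cC_k$ for every $k$. The argument has three steps: optimism, the envelope representation of the executed policy's CVaR, and a width bound.

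\textbf{Step 1 (optimism).} Let $(b^\star,\rho^\star)$ attain Eq.~\eqref{eq:augmented-cvar} for $(\PP^\star,\theta^\star)$. The triple $(\theta^\star,b^\star,\rho^\star)$ is feasible in Eq.~\eqref{eq:knownoracle}. The oracle value equals $\max_b\{b-\alpha^{-1}\min_\rho V\}=J^\star_{\PP^\star,\theta}$, so
\[
J^\star_{\PP^\star,\theta^\star}\le \widetilde b_k-\tfrac1\alpha V^{\rho_k}_{1,\PP^\star,\widetilde\theta_k}(s_1,\widetilde b_k).
\]
Next I identify the right-hand side with a quantity on physical trajectories. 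Under $\pi_k=\pi^{\rho_k,\widetilde b_k,\widetilde\theta_k}$, the internal budget is $b_h^{\widetilde\theta_k}$. This is exactly the augmented state generated by $\rho_k$ from $(s_1,\widetilde b_k)$ with centered rewards under $\widetilde\theta_k$. Hence the trajectory law of $\pi_k$ coincides with that of $\rho_k$ on the augmented chain, and
\[
V^{\rho_k}_{1,\PP^\star,\widetilde\theta_k}(s_1,\widetilde b_k)=\E_{\PP^\star,\pi_k}\bigl[(\widetilde b_k-\langle\widetilde\theta_k,z(\tau)\rangle)^+\bigr].
\]
By the primal formula Eq.~\eqref{eq:cvar}, the right-hand side of the optimism bound is therefore at most $J_{\PP^\star,\widetilde\theta_k}(\pi_k)=\CVaR_\alpha^{\PP^\star,\pi_k}(\langle\widetilde\theta_k,z(\tau)\rangle)$.

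\textbf{Step 2 (envelope).} For the true return $X=\langle\theta^\star,z(\tau)\rangle$ under $\pi_k$, I choose a minimizing envelope $q_k$ in the dual formula Eq.~\eqref{eq:dual}. On the finite trajectory set the infimum is attained, taking the standard form $q=\alpha^{-1}\mathbf 1\{X<\mathrm{VaR}\}$ plus fractional mass at the atom. I break ties by a fixed ordering of $\mathcal T$, so that $q_k$ is a deterministic function of $\tau$ and the $\mathcal F_{k-1}$-measurable policy $\pi_k$. Then:
\begin{itemize}
\item $0\le q_k\le1/\alpha$;
\item $\E_{\pi_k}[q_k]=1$;
\item $J_{\PP^\star,\theta^\star}(\pi_k)=\E_{\pi_k}[q_kX]$.
\end{itemize}
Applying the dual formula to $\widetilde X=\langle\widetilde\theta_k,z(\tau)\rangle$ and using the feasibility of $q_k$ gives $J_{\PP^\star,\widetilde\theta_k}(\pi_k)\le \E_{\pi_k}[q_k\widetilde X]$.

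Combining with Step 1,
\[
\Delta_k\le \E_{\pi_k}\bigl[q_k\,z(\tau)^\top(\widetilde\theta_k-\theta^\star)\bigr]\le \E_{\pi_k}[q_kW_k],
\]
since both $\widetilde\theta_k$ and $\theta^\star$ lie in $\cC_k$ and $q_k\ge0$.

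\textbf{Step 3 (conditioning).} It remains to replace $\E_{\pi_k}$ by $\E[\cdot\mid\mathcal F_{k-1}]$ evaluated at $\tau_k$. The objects $\pi_k$, $q_k$ and $W_k$ are all $\mathcal F_{k-1}$-measurable, because $\cC_k$ depends only on past observed data. So $\E[q_k(\tau_k)W_k(\tau_k)\mid\mathcal F_{k-1}]=\E_{\pi_k}[q_kW_k]$. The fresh-randomness convention states that the conditional law of $\tau_k$ given $\mathcal M_{k-1}$ equals that given $\mathcal F_{k-1}$, which yields the equalities with $\mathcal M_{k-1}$.

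\textbf{Main obstacle.} I expect the hardest step to be the measurability bookkeeping, in two places. First, $q_k$ must be a fixed, measurable function selected before $\tau_k$ is drawn. Second, the trajectory-law equivalence between the augmented Markov rule and the induced history-dependent policy must be justified; I would prove it by induction on $h$. The CVaR manipulations themselves are routine once these are in place.
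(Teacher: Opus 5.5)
Your proposal is correct and follows essentially the same route as the paper's proof: optimism via feasibility of $(\theta^\star,b^\star,\rho^\star)$ and re-optimization of the threshold for the fixed physical policy $\pi_k$, the quantile-form lower-tail envelope for the true return, the dual inequality applied to the $\widetilde\theta_k$-return, and the fresh-randomness convention for passing from $\mathcal F_{k-1}$ to $\mathcal M_{k-1}$. The differences are in emphasis only. You make explicit the identification of the augmented-chain law of $\rho_k$ with the trajectory law of $\pi_k$, which the paper leaves implicit. The paper, in turn, proves by a finite exchange argument that the quantile envelope (with partial mass at the atom) attains the dual infimum for atomic return distributions, whereas you cite this as standard.
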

Lemma~\ref{lem:envelope} converts reward optimism into a CVaR-weighted confidence width.  Retaining the identity $\E[q_k\mid\mathcal F_{k-1}]=1$, rather than using only $q_k\le1/\alpha$, is necessary for the $\sqrt{K/\alpha}$ term.

Define
\[
\begin{aligned}
G_K&=d\log\left(1+\frac{\kappa K}{\lambda d}\right),
&\overline\beta&=\max_{k\le K}\beta_k,\\
S_Q&=K+\sqrt{\frac{2K\log(1/\delta_q)}{\alpha}}
+\frac{2\log(1/\delta_q)}{3\alpha}.
\end{aligned}
\]
\[
M_1=\max\{2B,2\overline\beta/\sqrt\kappa\},\qquad M_2=\max\{2B,2\overline\beta/(\chi\kappa)\}.
\]
Although $\overline\beta$ is data dependent, it has the deterministic bound
\begin{equation}
\overline\beta\le
\sqrt\lambda B+\chi C+
\frac1{\sqrt\kappa}
\sqrt{G_K+2\log(1/\delta_{\rm cs})}.
\label{eq:beta-deterministic}
\end{equation}
Here $G_K$ is the usual elliptical-potential budget.  The bound $S_Q=K+\Ot(\sqrt{K/\alpha}+1/\alpha)$, rather than $K/\alpha$, yields the clean $\sqrt{K/\alpha}$ term. Balancing the contributions $\chi C$ in $\overline\beta$ and $1/\chi$ in $M_2$ yields the corrupted $C/\alpha$ dependence.

\begin{lemma}[Tail-weighted width sum]
\label{lem:width}
Let $\delta_m,\delta_q\in(0,1)$ be the failure probabilities for the martingale-conversion and lower-tail-mass events, respectively. Define the stopped process by using $(q_k,W_k)$ from Lemma~\ref{lem:envelope} up to the first violation of $\theta^\star\in\cC_k$ and setting $(q_k,W_k)=(1,0)$ thereafter.  With probability at least $1-\delta_m-\delta_q$, this process satisfies
\[
\begin{aligned}
\sum_{k=1}^K\E[q_kW_k\mid\mathcal M_{k-1}]
&\le 2M_1\sqrt{\frac{2S_QG_K}{\alpha}}\\
&\quad+\frac{4G_KM_2}{\alpha}
+\frac{4B}{\alpha}\log\frac1{\delta_m}.
\end{aligned}
\]
Consequently, on the intersection of this martingale event and the confidence event of Lemma~\ref{lem:confidence}, the same inequality holds for the original lower-tail envelopes and widths.  By Lemma~\ref{lem:envelope}, the conditional expectations are also the learner-history conditional expectations appearing in the regret decomposition.
\end{lemma}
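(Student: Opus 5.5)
Let me call the stopped process $(q_k,W_k)$ and write $X_k:=q_k(\tau_k)W_k(\tau_k)$. The plan has three steps. First, convert the sum of conditional expectations into the sum of realized values $\sum_k X_k$ using a martingale inequality. Second, bound $\sum_k X_k$ by splitting on the size of the uncertainty score $u_k$. Third, control the lower-tail mass $\sum_k q_k(\tau_k)$ with a second concentration inequality.

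\textbf{Step 1: width bounds and martingale conversion.} Before the stopping time, $\theta^\star\in\cC_k$, so for $\theta,\theta'\in\cC_k$ we have $|z_k^\top(\theta-\theta')|\le\min\{2B,\,2\beta_k u_k\}$ by Cauchy--Schwarz in the $\Sigma_k$ norm. Hence $0\le W_k\le 2B$ and $0\le X_k\le 2B/\alpha$. The stopped process is adapted to the master filtration, so Freedman's inequality applies to the martingale differences $\E[X_k\mid\mathcal M_{k-1}]-X_k$. Their conditional variance is at most $(2B/\alpha)\E[X_k\mid\mathcal M_{k-1}]$. Solving the resulting quadratic inequality in the form $\sum_k\E[X_k\mid\mathcal M_{k-1}]\le 2\sum_k X_k+\tfrac{4B}{\alpha}\log(1/\delta_m)$ yields the last term of the bound and a factor $2$ in front of the realized sum. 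Once the stopping time is passed, $(q_k,W_k)=(1,0)$ contributes nothing.

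\textbf{Step 2: splitting the realized sum.} Recall $\Sigma_{k+1}=\Sigma_k+\kappa w_kz_kz_k^\top$. The elliptical potential lemma gives $\sum_k\min\{1,\kappa w_ku_k^2\}\le 2G_K$. I split episodes according to whether the clipping weight is active.

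\emph{Unclipped episodes} ($w_k=1$, which covers all episodes when $C=0$). Here $W_k\le\min\{2B,2\overline\beta u_k\}\le M_1\min\{1,\sqrt\kappa\,u_k\}$. By Cauchy--Schwarz,
\[
\sum_k q_kW_k\le M_1\sqrt{\textstyle\sum_k q_k^2}\,\sqrt{\textstyle\sum_k\min\{1,\kappa u_k^2\}}.
\]
Since $q_k\le 1/\alpha$, we have $\sum_k q_k^2\le\tfrac1\alpha\sum_k q_k$, and the second factor is at most $\sqrt{2G_K}$. This produces a term of the form $M_1\sqrt{2S_QG_K/\alpha}$, provided $\sum_k q_k(\tau_k)\le S_Q$.

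\emph{Clipped episodes} ($w_k=\chi/u_k<1$, i.e.\ $u_k>\chi$). Here $\kappa w_ku_k^2=\kappa\chi u_k$, so
\[
W_k\le\min\{2B,\,2\overline\beta u_k\}\le M_2\min\{1,\kappa\chi u_k\}=M_2\min\{1,\kappa w_ku_k^2\},
\]
using the definition $M_2=\max\{2B,2\overline\beta/(\chi\kappa)\}$. Combined with $q_k\le1/\alpha$ and the potential bound, these episodes contribute at most $2G_KM_2/\alpha$.

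Multiplying both pieces by the factor $2$ from Step 1 gives the two leading terms of the stated inequality.

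\textbf{Step 3: lower-tail mass.} By Lemma~\ref{lem:envelope}, $\E[q_k\mid\mathcal M_{k-1}]=1$, and the stopped value $q_k=1$ also has conditional mean $1$. The variables satisfy $0\le q_k\le1/\alpha$ with conditional variance at most $1/\alpha$. Bernstein's (Freedman's) inequality then gives $\sum_k q_k(\tau_k)\le S_Q$ with probability at least $1-\delta_q$. A union bound over the $\delta_m$ and $\delta_q$ events gives probability $1-\delta_m-\delta_q$.

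On the confidence event of Lemma~\ref{lem:confidence}, the stopping time never occurs, so the stopped process coincides with the original envelopes and widths, which gives the ``consequently'' claim. The equality of $\mathcal M$- and $\mathcal F$-conditional expectations is supplied by Lemma~\ref{lem:envelope}.

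\textbf{Main obstacle.} I expect the hardest part to be matching the exact constants and measurability details: $W_k$ and $\overline\beta$ are data-dependent, so the proof needs the predictable bound $\beta_k\le\overline\beta$ and the choice of a fixed envelope $q_k$ to be handled carefully. Splitting the Cauchy--Schwarz step by weight case also requires care.
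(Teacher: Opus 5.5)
Your stopping convention, the width bound $W_k\le\min\{2B,2\overline\beta u_k\}$, the clipped/unclipped split with $M_1$ and $M_2$, the Cauchy--Schwarz step using $q_k^2\le q_k/\alpha$, the elliptical-potential bound $2G_K$, and the Freedman bound $\sum_k q_k(\tau_k)\le S_Q$ all coincide with the paper's proof. In that last bound the variance budget $K/\alpha$ is deterministic, so Freedman applies there.

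The gap is in Step 1. Freedman's inequality, in the form you invoke, needs a deterministic bound $v$ on the predictable variance. Your bound $\frac{2B}{\alpha}\sum_k\E[X_k\mid\mathcal M_{k-1}]$ is random. The deterministic alternative $v=K(2B/\alpha)^2$ produces a term of order $\frac{B}{\alpha}\sqrt{K\log(1/\delta_m)}$, which ruins the $\sqrt{K/\alpha}$ dependence. A peeling argument over $v$ inflates $\log(1/\delta_m)$.

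Even if you grant a random-variance version, the constant still fails. Write $S=\sum_k\E[X_k\mid\mathcal M_{k-1}]$, $T=\sum_kX_k$, $L=2B/\alpha$, and $x=\log(1/\delta_m)$. The resulting inequality $S-T\le\sqrt{2LSx}+\frac{2Lx}{3}$ gives $S\le2T+cLx$ only with $c>3$ once the coefficient of $T$ is fixed at $2$. For example, $\sqrt{2LSx}\le S/2+Lx$ gives $c=\frac{10}{3}$. The stated term $\frac{4B}{\alpha}\log\frac1{\delta_m}$ requires $c=2$, so the displayed constant does not follow from your Step 1.

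The missing idea is that no variance term is needed at all. Let $\nu_k=\E[X_k\mid\mathcal M_{k-1}]$. Since $X_k\in[0,L]$, convexity of $y\mapsto e^{-y/L}$ gives
\[
\E[e^{-X_k/L}\mid\mathcal M_{k-1}]\le1-(1-e^{-1})\nu_k/L\le e^{-(1-e^{-1})\nu_k/L}.
\]
Therefore
\[
\E\bigl[e^{(\nu_k-2X_k)/(2L)}\mid\mathcal M_{k-1}\bigr]\le e^{\left(\frac12-(1-e^{-1})\right)\nu_k/L}\le1,
\]
so $\exp\bigl(\frac1{2L}\sum_{k\le n}(\nu_k-2X_k)\bigr)$ is a nonnegative supermartingale. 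Markov's inequality then gives $\sum_k\nu_k\le2\sum_kX_k+2L\log(1/\delta_m)$, with $2L=4B/\alpha$ exactly. This is the paper's fixed-tilt argument. If you replace your Step 1 with it, the rest of your proposal goes through as written and reproduces the stated bound.
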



Let
\begin{equation}
\ifdefined\submissionversion
\mathsf{Rwd}(K):=2M_1\sqrt{\frac{2S_QG_K}{\alpha}}+\frac{4G_KM_2}{\alpha}+\frac{4B}{\alpha}\log\frac1{\delta_m}.
\else
\begin{aligned}
\mathsf{Rwd}(K)
:={}&2M_1\sqrt{\frac{2S_QG_K}{\alpha}}
+\frac{4G_KM_2}{\alpha}\\
&+\frac{4B}{\alpha}\log\frac1{\delta_m}.
\end{aligned}
\fi
\label{eq:rwd}
\end{equation}

\begin{theorem}[Known-transition regret]
\label{thm:known}
Under Assumptions~\ref{ass:model} and~\ref{ass:link} and the corruption and fresh-randomness protocol in Section~\ref{sec:model}, suppose the transition kernel $\PP^\star$ is known and the learner is given $K,\{\phi_h\}_{h=1}^H,\Theta,B,\sigma$, the risk level $\alpha$, a certified derivative lower bound $\kappa$, and a valid upper bound $C$ on the realized flip budget. Let $\lambda>0$, $\chi>0$, and suppose the augmented-state optimistic problem in Eq.~\eqref{eq:knownoracle} is solved exactly with fixed measurable tie-breaking. Let $\delta_{\rm cs},\delta_m,\delta_q\in(0,1)$ satisfy $\delta_{\rm cs}+\delta_m+\delta_q<1$. Then WSP-CVaR-RLHF satisfies, with probability at least $1-\delta_{\rm cs}-\delta_m-\delta_q$,
\(
\operatorname{Regret}(K)\le \min\!\left\{ 2BK,\; \mathsf{Rwd}(K) \right\}.
\)
For $B=O(1)$ and $C>0$, setting $\lambda=1/B^2$ and $\chi=\sqrt{G_K}/(C\sqrt\kappa)$ gives
\[
\operatorname{Regret}(K) =\Ot\!\left( \frac d\kappa\sqrt{\frac K\alpha} +\frac{dC}{\kappa\alpha}+\frac d{\kappa\alpha} +\frac B\alpha \right).
\]
For integer $C\ge1$ and $B=O(1)$, the last two terms are absorbed by the corruption term.
\end{theorem}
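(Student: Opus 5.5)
The proof combines the three lemmas on a common high-probability event and then specializes the constants.

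\textbf{Step 1: the good event.} Let $\mathcal E_{\rm cs}$ be the confidence event of Lemma~\ref{lem:confidence}, of probability at least $1-\delta_{\rm cs}$. Let $\mathcal E_{m,q}$ be the martingale and lower-tail-mass event of Lemma~\ref{lem:width}, of probability at least $1-\delta_m-\delta_q$. A union bound gives
\[
\Pp(\mathcal E_{\rm cs}\cap\mathcal E_{m,q})\ge 1-\delta_{\rm cs}-\delta_m-\delta_q .
\]
On $\mathcal E_{\rm cs}$, the stopped process of Lemma~\ref{lem:width} coincides with the original envelopes and widths for every $k\le K$, so its bound applies to the unstopped quantities.

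\textbf{Step 2: the regret bound $\mathsf{Rwd}(K)$.} On $\mathcal E_{\rm cs}$, Lemma~\ref{lem:envelope} bounds each one-episode regret by a learner-history conditional expectation:
\[
\Delta_k\le \E[q_k(\tau_k)W_k(\tau_k)\mid\mathcal F_{k-1}]=\E[q_kW_k\mid\mathcal M_{k-1}].
\]
Summing over $k$ and invoking Lemma~\ref{lem:width} gives $\operatorname{Regret}(K)\le\mathsf{Rwd}(K)$.

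\textbf{Step 3: the trivial bound $2BK$.} For any $\theta\in\Theta$ and any $\tau$, $|r_{\theta,\tau_0}(\tau)|\le\|\theta\|_2\|z(\tau)\|_2\le B$. Monotonicity of CVaR then places every $J_{\PP^\star,\theta^\star}(\pi)$ in $[-B,B]$, so $\Delta_k\le 2B$ deterministically. Taking the minimum of the two bounds gives the first claim.

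\textbf{Step 4: choosing parameters.} Set $\lambda=1/B^2$, so that $\sqrt\lambda B=1$ and $G_K=d\log(1+\kappa KB^2/d)=\Ot(d)$. With $\chi=\sqrt{G_K}/(C\sqrt\kappa)$ we get $\chi C=\sqrt{G_K/\kappa}$. Substituting into \eqref{eq:beta-deterministic}:
\[
\overline\beta=\Ot\!\left(1+\sqrt{d/\kappa}\right)=\Ot\!\left(\sqrt{d/\kappa}\right),\qquad \text{using }\kappa\le1,\ d\ge1 .
\]
The two envelope constants then satisfy
\[
M_1=\max\{2B,2\overline\beta/\sqrt\kappa\}=\Ot\!\left(B+\sqrt d/\kappa\right),\qquad
\frac{\overline\beta}{\chi\kappa}=\Ot\!\left(\frac{C\sqrt{d/\kappa}\,\sqrt\kappa}{\sqrt d\,\kappa}\right)=\Ot\!\left(\frac{C}{\kappa}\right),
\]
so $M_2=\Ot(B+C/\kappa)$.

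\textbf{Step 5: term by term.} Since $S_Q=\Ot(K+1/\alpha)$, the first term of $\mathsf{Rwd}(K)$ is
\[
2M_1\sqrt{\frac{2S_QG_K}{\alpha}}=\Ot\!\left(\frac{d}{\kappa}\sqrt{\frac K\alpha}+\frac{d}{\kappa\alpha}\right),
\]
where $B=O(1)$ lets $M_1$ be absorbed into $\sqrt d/\kappa$. The second term is
\[
\frac{4G_KM_2}{\alpha}=\Ot\!\left(\frac{dC}{\kappa\alpha}+\frac{dB}{\alpha}\right),
\]
and $dB/\alpha$ is at most $d/(\kappa\alpha)$ because $B=O(1)$. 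The last term is $\Ot(B/\alpha)$. For integer $C\ge1$, both $d/(\kappa\alpha)$ and $B/\alpha$ are dominated by $dC/(\kappa\alpha)$.

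The main obstacle is bookkeeping in Step 4. One must verify that the $1/\chi$ factor in $M_2$ and the $\chi C$ factor in $\overline\beta$ combine to $C/\kappa$ without an extra $\sqrt{d}$ or $1/\sqrt\kappa$. One must also check that the $\sqrt{S_Q}$ cross terms contribute only $d/(\kappa\alpha)$. If $M_1$ is not simply $2\overline\beta/\sqrt\kappa$, the $2B$ branch still contributes at most $B\sqrt{dK/\alpha}$, which is lower order under $B=O(1)$.
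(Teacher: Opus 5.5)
Your proof follows the same route as the paper. You intersect the confidence event with the event of Lemma~\ref{lem:width}, sum the per-episode bound of Lemma~\ref{lem:envelope}, take the minimum with the trivial $2BK$ bound, and substitute $\lambda=1/B^2$, $\chi=\sqrt{G_K}/(C\sqrt\kappa)$. Steps~1--3 and your bound on $M_1$ are fine. The intermediate claim $M_2=\widetilde O(B+C/\kappa)$ in Step~4, however, is not valid as stated.

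You get it by replacing $\sqrt{G_K}$ with $\sqrt d$ in both the numerator and the denominator of $\overline\beta/(\chi\kappa)$. But $G_K=\widetilde O(d)$ is only an upper bound, so it cannot cancel a $\sqrt{G_K}$ in a denominator. Exactly,
\[
\frac{\overline\beta}{\chi\kappa}=\frac{C\,\overline\beta}{\sqrt{G_K\kappa}}
\le\frac{C}{\sqrt{G_K\kappa}}+\frac{C}{\kappa}+\frac{C}{\kappa}\sqrt{1+\frac{2\log(1/\delta_{\rm cs})}{G_K}} .
\]
The last term grows like $(C/\kappa)\sqrt{2\log(1/\delta_{\rm cs})/G_K}$ when $G_K$ is small, for instance when $\kappa KB^2\ll1$, where $G_K\approx\kappa KB^2$. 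Nothing in the theorem excludes this regime. So the delicate pieces are the $\sqrt\lambda B$ and $\log(1/\delta_{\rm cs})$ components of $\overline\beta$ divided by $\chi$, not the $\chi C$ component you flagged.

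The repair, which is what the paper does, is to bound the product $G_KM_2$ rather than $M_2$ alone:
\[
G_K\,\frac{\overline\beta}{\chi\kappa}=C\sqrt{\frac{G_K}{\kappa}}\;\overline\beta
\le\frac{C}{\kappa}\Bigl(\sqrt{G_K}+G_K+\sqrt{G_K\bigl(G_K+2\log(1/\delta_{\rm cs})\bigr)}\Bigr),
\]
using $\kappa\le1$. With $\sqrt{G_K}\le G_K+1$ and $\sqrt{ab}\le(a+b)/2$, this gives $G_KM_2=\widetilde O\bigl(BG_K+C(G_K+1)/\kappa\bigr)$. Your Step~5 estimate $4G_KM_2/\alpha=\widetilde O\bigl(dC/(\kappa\alpha)+dB/\alpha\bigr)$, and hence the final rate, then goes through unchanged.
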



For $C=0$, the clipped-episode term is unnecessary. The explicit uncorrupted branch of Eq.~\eqref{eq:weight} gives the sharper statement below.

\begin{corollary}[Exact clean specialization]
\label{cor:clean}
Assume the conditions of Theorem~\ref{thm:known} with $C=0$ and $w_k=1$. Define
\(
\beta_k^0=\sqrt\lambda B+
\frac1{\sqrt\kappa}
\sqrt{2\log\frac{\det(\Sigma_k)^{1/2}}
{\det(\lambda I_d)^{1/2}\delta_{\rm cs}}},
\beta^0=\max_{k\le K}\beta_k^0,
M_0=\max\{2B,2\beta^0/\sqrt\kappa\}.
\)
With probability at least $1-\delta_{\rm cs}-\delta_m-\delta_q$,
\[
\operatorname{Regret}(K) \le\min\left\{2BK, 2M_0\sqrt{\frac{2S_QG_K}{\alpha}} +\frac{4B}{\alpha}\log\frac1{\delta_m} \right\}.
\]
For $B=O(1)$ and $\lambda=1/B^2$, this is
\[
\Ot\!\left( \frac d\kappa\sqrt{\frac K\alpha} +\frac d{\kappa\alpha}+\frac B\alpha \right).
\]
\end{corollary}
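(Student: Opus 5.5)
The plan is to repeat the proof of Theorem~\ref{thm:known} with $C=0$, using the explicit clean branch $w_k=1$ of Eq.~\eqref{eq:weight}. The goal is to show that the $M_2$ term, which comes from clipped episodes, disappears.

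\textbf{Confidence step.} With $C=0$ and unit weights, the corruption contribution $\sum_{i<k}w_i(O_i-L_i)z_i$ is identically zero. Hence the proof of Lemma~\ref{lem:confidence} gives $\theta^\star\in\cC_k^0:=\{\theta\in\Theta:\|\theta-\widehat\theta_k\|_{\Sigma_k}\le\beta_k^0\}$ for all $k\le K$ with probability at least $1-\delta_{\rm cs}$. In fact $\beta_k^0=\beta_k$ when $C=0$, since the $\chi C$ term vanishes; so this is just Lemma~\ref{lem:confidence}. No $\chi$ is involved, because the clipping rule is never applied.

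\textbf{Envelope step.} Lemma~\ref{lem:envelope} applies unchanged and gives $\Delta_k\le\E[q_kW_k\mid\mathcal F_{k-1}]$, with $0\le q_k\le1/\alpha$ and $\E[q_k\mid\mathcal F_{k-1}]=1$. The width is controlled by Cauchy--Schwarz:
\[
W_k(\tau)\le\min\{2B,\,2\beta^0\|z(\tau)\|_{\Sigma_k^{-1}}\}\le M_0\min\{1,\sqrt\kappa\,\|z(\tau)\|_{\Sigma_k^{-1}}\}.
\]
The first bound uses $\|\theta\|_2\le B$ and $\|z\|_2\le1$. For the second, each of $2B$ and $2\beta^0\|z\|_{\Sigma_k^{-1}}$ is at most $M_0$ times the corresponding entry of the minimum.

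\textbf{Summation step.} Now rerun the proof of Lemma~\ref{lem:width}. In the weighted case, the sum over episodes is split into unclipped episodes, which are handled by the elliptical potential and give the $M_1$ term, and clipped episodes, where $u_k>\chi$, which give the $4G_KM_2/\alpha$ term. With $w_k=1$ every episode is unclipped. The design update $\Sigma_{k+1}=\Sigma_k+\kappa z_kz_k^\top$ then gives the elliptical potential bound
\[
\sum_k\min\{1,\kappa\|z_k\|^2_{\Sigma_k^{-1}}\}\le 2G_K.
\]

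\textbf{Martingale conversion.} Let $X_k=q_k(\tau_k)\min\{1,\sqrt\kappa\|z_k\|_{\Sigma_k^{-1}}\}$. We need $\sum_k\E[X_k\mid\mathcal M_{k-1}]$ in terms of the realized $X_k$. Apply the same Freedman-type conversion as before: the increments are bounded by $M_0/\alpha$, which produces the $(4B/\alpha)\log(1/\delta_m)$ term after using the stopped process and the relation between $M_0$ and $B$. Then apply Cauchy--Schwarz,
\[
\sum_k q_k\min\{\cdot\}\le\sqrt{\textstyle\sum_k q_k^2}\,\sqrt{2G_K},
\]
together with $\sum_k q_k^2\le\frac1\alpha\sum_k q_k\le S_Q/\alpha$. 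Here $\sum_k q_k\le S_Q$ is the lower-tail-mass event of probability at least $1-\delta_q$, obtained from Bernstein's inequality since $\E[q_k\mid\mathcal M_{k-1}]=1$ and $\operatorname{Var}(q_k)\le1/\alpha$. This yields $2M_0\sqrt{2S_QG_K/\alpha}$.

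\textbf{Combining.} The trivial bound $\Delta_k\le2B$ gives the $2BK$ alternative, and a union bound over the three events gives the stated probability. For the rate, take $\lambda=1/B^2$. Then $\beta^0=\widetilde O(\sqrt{d/\kappa})$, so $M_0=\widetilde O(\sqrt{d}/\kappa)$ using $\kappa\le1$, and $S_Q=\widetilde O(K+1/\alpha)$. Substituting gives
\[
M_0\sqrt{S_QG_K/\alpha}=\widetilde O\!\left(\frac d\kappa\sqrt{K/\alpha}+\frac{d}{\kappa\alpha}\right),
\]
plus the $B/\alpha$ term.

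\textbf{Main obstacle.} The delicate step is the martingale conversion. One must check that the $M_2$ and clipped-episode contributions in Lemma~\ref{lem:width} enter only through the split on $u_k>\chi$, so that setting $w_k=1$ removes them cleanly. One must also confirm that the Freedman variance term is absorbed into the $M_0$ term with unchanged constants. If the constant $4B$ in the deviation term does not follow directly, I would bound the increments by $M_0/\alpha$ and accept the corresponding term. This is still of order $B/\alpha+d/(\kappa\alpha)$ and is absorbed into the stated rate.
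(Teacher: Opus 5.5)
Your architecture matches the paper's proof. With $C=0$ all weights equal one and the clipped set $\mathcal I_{\rm c}$ is empty, so no $M_2$ term arises. The unclipped Cauchy--Schwarz step with $\beta^0,M_0$, the elliptical-potential bound, and the lower-tail-mass event $\sum_kq_k\le S_Q$ then give $\sum_kq_k(\tau_k)W_k(\tau_k)\le M_0\sqrt{2S_QG_K/\alpha}$.

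The step that fails as written is the martingale conversion, which is the step you flagged. You convert $M_0X_k$ with $X_k=q_k\min\{1,\sqrt\kappa u_k\}$, whose increments are only bounded by $M_0/\alpha$, so the deviation term is $(2M_0/\alpha)\log(1/\delta_m)$. Since $M_0\ge2B$, the relation between $M_0$ and $B$ goes the wrong way and cannot recover $(4B/\alpha)\log(1/\delta_m)$. There is also a measurability problem. $M_0$ is built from $\beta^0=\max_{k\le K}\beta_k^0$, which depends on the whole future stream of trajectories. Hence $M_0X_k$ is not adapted, and $M_0$ cannot be pulled out of $\E[\cdot\mid\mathcal M_{k-1}]$ to pass from $\E[q_kW_k\mid\mathcal M_{k-1}]$ to $M_0\E[X_k\mid\mathcal M_{k-1}]$.

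The remedy is to reverse the order, as the paper does. First apply the fixed-tilt exponential-supermartingale bound from the proof of Lemma~\ref{lem:width} directly to $Y_k=q_k(\tau_k)W_k(\tau_k)$. This process is adapted and lies in $[0,2B/\alpha]$, because $W_k\le2B$ pathwise from $\|\theta\|_2\le B$ and $\|z\|_2\le1$. The bound gives $\sum_k\E[Y_k\mid\mathcal M_{k-1}]\le2\sum_kY_k+(4B/\alpha)\log(1/\delta_m)$, with no $M_0$ involved. Only afterwards bound the realized sum pathwise via $W_k(\tau_k)\le M_0\min\{1,\sqrt\kappa u_k\}$; there the data-dependent $M_0$ is harmless. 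This yields the exact display claimed in the corollary. Your fallback would give at best the $\widetilde O$ rate, and even then only after replacing $\beta^0$ by its deterministic upper bound.
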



Theorem~\ref{thm:known} separates stochastic estimation error from adversarial corruption. The leading statistical term has a $1/\sqrt{\alpha}$ factor, whereas each effective flip contributes through a $1/\alpha$ factor. The algorithm uses a valid upper bound on $C$ to choose the clipping threshold, and the cap $2BK$ covers the regime in which the displayed terms exceed the trivial regret bound. 

\section{\textcolor{black}{Extension to Unknown Transition Kernels}}
\label{sec:unknown}
\textcolor{black}{We now extend the analysis to an unknown tabular transition kernel case. Preference labels may be corrupted as before, but transition observations remain clean. The reward estimator and its confidence analysis remain unchanged. The additional challenge is to control the discrepancy between the optimistic transition model used for planning and the true dynamics. Proofs of the results in this section are provided in Appendix~\ref{app:unknown-proof}.}

\subsection{\textcolor{black}{Transition Confidence Sets}}

For each episode $k\in[K]$ and stage $h\in[H]$, let $N_{k,h}(s,a)$ be the number of visits to $(h,s,a)$ before episode $k$, and let $\widehat{\PP}_{k,h}(\cdot\mid s,a)$ be the empirical next-state distribution. When the count is zero, set the empirical row to the uniform distribution on $\mathcal S$. For a transition-confidence failure probability $\delta_P\in(0,1)$, define the logarithmic factor \(L_P=S\log2+\log\frac{2HSAK}{\delta_P}\) and set $\mathrm{rad}_{k,h}^{P}(s,a)=2$ when $N_{k,h}(s,a)=0$. Otherwise, define \(\mathrm{rad}_{k,h}^{P}(s,a)=\min\{2,\sqrt{2L_P/N_{k,h}(s,a)}\}.\) Let $\cP_k$ be the rectangular set of kernels whose rows are within $L_1$ distance $\mathrm{rad}_{k,h}^{P}(s,a)$ of the empirical rows.  A visit-index union bound, using the empirical-distribution $L_1$ deviation inequality of Weissman \emph{et al.}~\cite{weissman2003l1}, gives $\PP^\star\in\cP_k$ simultaneously with probability $1-\delta_P$.

\subsection{\textcolor{black}{Unknown-Transition Algorithm}}

\textcolor{black}{The unknown-transition method replaces Eq.~\eqref{eq:knownoracle} by the joint optimistic problem}
\begin{equation}
	(\widetilde\theta_k,\widetilde b_k,\rho_k,\widetilde\PP_k)
	\in\argmax_{\substack{\theta\in\cC_k,\,b\in[-B,B],\\
			\rho\in\Pi^{\rm Aug},\,\PP\in\cP_k}}
	\left\{b-\frac1\alpha V_{1,\PP,\theta}^{\rho}(s_1,b)\right\},
	\label{eq:unknownoracle}
\end{equation}
\textcolor{black}{with fixed measurable tie-breaking. Algorithm~\ref{alg:unknown-weighted} makes the transition-estimation and update steps explicit.}
\begin{algorithm}[H]
\color{black}
	\caption{WSP-CVaR-RLHF with Unknown Transitions}
	\label{alg:unknown-weighted}
	\begin{algorithmic}[1]
		\STATE \textbf{Input:} the model and feedback protocol in Section~\ref{sec:model}; $\lambda$, $\chi$, $\delta_{\rm cs}$, $\delta_P$, and $\alpha$.
		\STATE \textbf{Initialization}: $\Sigma_1=\lambda I_d$, an empty preference stream, and zero transition counts.
		\FOR{$k=1,\ldots,K$}
		\STATE Compute $\widehat\theta_k$, $\beta_k$, and $\cC_k$ from Eqs.~\eqref{eq:mest}--\eqref{eq:conf}.
		\STATE Compute the empirical transition kernel from past transition counts and construct the rectangular confidence set $\cP_k$.
		\STATE Solve Eq.~\eqref{eq:unknownoracle} for $(\widetilde\theta_k,\widetilde b_k,\rho_k,\widetilde\PP_k)$.
		\STATE Form and execute $\pi_k=\pi^{\rho_k,\widetilde b_k,\widetilde\theta_k}$, and observe $\tau_k$ and $z_k=z(\tau_k)$.
		\STATE Before receiving the label, compute $u_k=\|z_k\|_{\Sigma_k^{-1}}$ and choose $w_k$ by Eq.~\eqref{eq:weight}.
		\STATE Receive $O_k$ and append $(z_k,O_k,w_k)$ to the preference stream.
		\STATE Update $\Sigma_{k+1}=\Sigma_k+\kappa w_kz_kz_k^\top$ and the transition counts using the observed trajectory.
		\ENDFOR
	\end{algorithmic}
\end{algorithm}

\textcolor{black}{The reward estimator, uncertainty weights, and reward confidence radius are identical to those in Algorithm~\ref{alg:weighted}. The additional steps construct $\cP_k$, solve the joint reward-transition oracle, and update the empirical transition model.}

Because the oracle now optimizes jointly over two data-dependent confidence sets, we first verify that its optimizer is well defined and measurable.

\begin{proposition}[Well-posed optimistic planning]
	\label{prop:oracle-wellposed}
	For every learner history, the maxima in Eqs.~\eqref{eq:knownoracle} and~\eqref{eq:unknownoracle} are attained. Their maximizing parameters and induced physical policies can be selected measurably with respect to the learner history.
\end{proposition}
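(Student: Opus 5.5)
The plan is to reduce the optimization over the infinite class $\Pi^{\rm Aug}$ to a finite problem and then argue continuity on a compact set. Fix a learner history, so that $\cC_k$, $\cP_k$, $\widetilde b$-range $[-B,B]$, and the data defining them are fixed. The set $\cC_k$ is compact as the intersection of the compact set $\Theta$ with a closed ellipsoid, and it is nonempty because $\widehat\theta_k\in\cC_k$. The set $\cP_k$ is a finite product of closed $L_1$ balls intersected with simplices, so it is compact. It contains the empirical kernel, hence it is nonempty.

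\textbf{Reachable-budget reduction.} For fixed $(\theta,b)$, only finitely many augmented states are reachable from $(s_1,b)$: $b_h=b-\sum_{t<h}r^{\rm ctr}_{\theta,t}(s_t,a_t)$ ranges over at most $(SA)^{h-1}$ values. An augmented policy $\rho$ therefore matters only through its restriction to the reachable states. That restriction is equivalent to a deterministic history-dependent physical policy on the finite tree of histories. Let $\Pi_{\rm det}$ be the finite set of deterministic history-dependent policies. I will show that
\[
\min_{\rho\in\Pi^{\rm Aug}}V^{\rho}_{1,\PP,\theta}(s_1,b)=\min_{\pi\in\Pi_{\rm det}}\E_{\PP,\pi}\bigl[(b-\langle\theta,z(\tau)\rangle)^+\bigr].
\]
For "$\ge$", every $\rho$ induces $\pi^{\rho,b,\theta}\in\Pi_{\rm det}$ with the same shortfall. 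For "$\le$", the Markov optimality noted after Eq.~\eqref{eq:augmented-value} shows that history-dependent randomization does not help. Each map $(\theta,b,\PP)\mapsto \E_{\PP,\pi}[(b-\langle\theta,z(\tau)\rangle)^+]$ is a finite sum, over trajectories, of products of transition entries times a continuous function. It is therefore jointly continuous. The objective $F(\theta,b,\PP)=b-\frac1\alpha\min_{\pi\in\Pi_{\rm det}}(\cdot)$ is then continuous, being a finite minimum of continuous functions. By Weierstrass it attains its maximum on the compact set $\cC_k\times[-B,B]\times\cP_k$; for Eq.~\eqref{eq:knownoracle} the set is $\cC_k\times[-B,B]$ with $\PP=\PP^\star$. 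Attainment over $\rho$ then follows by taking a minimizing $\pi$ and defining $\rho$ on the reachable budgets accordingly. Reachable budgets from distinct histories may coincide, so I will instead use the Bellman-optimal Markov $\rho$, which exists directly by backward induction on the finite reachable set.

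\textbf{Measurable selection.} I regard the constraint correspondence $\mathcal F_{k-1}\mapsto \cC_k\times[-B,B]\times\cP_k$ as a compact-valued correspondence. Its defining quantities $\widehat\theta_k$, $\Sigma_k$, $\beta_k$, the counts and the empirical rows are measurable functions of the history. One caveat is that $\widehat\theta_k$ itself needs a measurable argmin selection, which follows from the same argument applied to the continuous strictly convex objective in Eq.~\eqref{eq:mest}. The correspondence is described by finitely many continuous inequalities in measurable parameters, so it is weakly measurable. The objective $F$ is Carathéodory: it is continuous in the decision variable and does not depend on the history. The Measurable Maximum Theorem (Aliprantis–Border, Thm.~18.19) then gives a measurable selector of the argmax correspondence. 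A lexicographic tie-break yields a fixed measurable rule. Given $(\widetilde\theta_k,\widetilde b_k,\widetilde\PP_k)$, I select $\rho_k$ by backward induction with ties broken by a fixed action order. Only finitely many reachable states are involved, so the induced $\pi_k$ assigns each physical history an action given by a measurable (piecewise) function of the history. It is therefore a measurable function of the history.

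\textbf{Expected main obstacle.} The hardest step is the measurability of $\rho_k$. Its argmin over actions switches discontinuously in $(\theta,b,\PP)$. I will handle it by noting that each action-comparison event is a Borel set defined by inequalities between continuous functions, so the tie-broken choice is Borel. I will also check weak measurability of the random correspondence $\cC_k$; the case of an empty interior is fine since the set is still nonempty and closed.
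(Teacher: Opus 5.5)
Your proposal is correct and follows essentially the same route as the paper: continuity of the optimized shortfall in $(\theta,b,\PP)$, compactness of $\cC_k\times[-B,B]\times\cP_k$, the measurable maximum theorem (Aliprantis--Border, Thm.~18.19), and fixed-order first-minimizer tie-breaking in backward induction to obtain $\rho_k$ and the induced $\pi_k$. The one variation is the continuity step: you write the inner minimum as a minimum over the finite class of deterministic history-dependent policies, which needs the Markov-optimality claim for the equality. The paper instead proves continuity by induction through the Bellman recursion~\eqref{eq:oracle-dp}, which gives $G_1=\min_\rho V_1^\rho$ and continuity in the same argument.
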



\subsection{\textcolor{black}{Transition Error under CVaR}}

The known-transition analysis compares reward parameters under a common trajectory law. Here the optimistic kernel $\widetilde\PP_k$ differs from $\PP^\star$, so the following lemma converts this model mismatch into visitation-weighted transition errors.

\begin{lemma}[History-level CVaR simulation]
	\label{lem:simulation}
	For any fixed physical policy $\pi\in\Pi_{\mathcal H}$, any $\theta\in\Theta$, and two kernels $\PP,\QQ$, let $D_h(s,a)=\TV(\PP_h(\cdot\mid s,a),\QQ_h(\cdot\mid s,a))$, where $\TV(p,q)=\frac12\|p-q\|_1$.  Then
	\[
	|J_{\PP,\theta}(\pi)-J_{\QQ,\theta}(\pi)| \le\frac{2B}{\alpha} \sum_{h=1}^{H} \E_{\QQ,\pi}[D_h(S_h,A_h)].
	\]
	Here $(S_h,A_h)$ denotes the random stage-$h$ state-action pair under $(\QQ,\pi)$.
\end{lemma}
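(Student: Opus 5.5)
Fix $\pi$ and $\theta$, and write $X(\tau)=r_{\theta,\tau_0}(\tau)=\langle\theta,z(\tau)\rangle$. By Assumption~\ref{ass:model}, $|X|\le B$. Let $\mu_\PP$ and $\mu_\QQ$ be the laws of the full trajectory $\tau$ under $(\PP,\pi)$ and $(\QQ,\pi)$. The plan has two parts: first turn the CVaR difference into a total-variation distance between the two trajectory laws, then bound that distance by the stagewise errors $D_h$ through a telescoping (hybrid) argument on histories.

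\textbf{Step 1 (CVaR Lipschitz in TV).} Use the variational form \eqref{eq:cvar}. The optimal thresholds can be restricted to $b\in[-B,B]$. For any fixed such $b$, the map $\tau\mapsto b-\frac1\alpha(b-X(\tau))^+$ is the only part that depends on the law. The term $(b-X)^+$ takes values in $[0,2B]$. Hence
\[
\Bigl|\E_{\mu_\PP}[(b-X)^+]-\E_{\mu_\QQ}[(b-X)^+]\Bigr|\le 2B\,\TV(\mu_\PP,\mu_\QQ).
\]
A supremum of functions that are pointwise within $\epsilon$ of each other is itself within $\epsilon$. Therefore
\[
|J_{\PP,\theta}(\pi)-J_{\QQ,\theta}(\pi)|\le \frac{2B}{\alpha}\TV(\mu_\PP,\mu_\QQ).
\]
One check is needed here: the restriction to $[-B,B]$ must hold for both laws simultaneously. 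It does, because $X\in[-B,B]$ almost surely under either law.

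\textbf{Step 2 (history-level telescoping).} Define hybrid laws $\nu_h$ for $h=0,\dots,H$. Under $\nu_h$, the transitions at steps $1,\dots,h$ use $\QQ$ and the transitions at steps $h+1,\dots,H$ use $\PP$. All hybrids use the same history-dependent policy $\pi$. Then $\nu_0=\mu_\PP$ and $\nu_H=\mu_\QQ$, and by the triangle inequality
\[
\TV(\mu_\PP,\mu_\QQ)\le\sum_{h=1}^H\TV(\nu_{h-1},\nu_h).
\]
Now compare $\nu_{h-1}$ and $\nu_h$. They share the law of the prefix $(s_1,a_1,\dots,s_h,a_h)$, which is generated under $\QQ$ and $\pi$. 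They differ only in the stage-$h$ kernel, $\PP_h$ versus $\QQ_h$. After $s_{h+1}$ they apply the same continuation kernel, namely $\pi$ together with $\PP$ from step $h+1$ on, and this continuation may depend on the whole history. Conditioning on the prefix and using the data-processing inequality for the common continuation gives
\[
\TV(\nu_{h-1},\nu_h)\le\E_{\QQ,\pi}[D_h(S_h,A_h)].
\]
This holds because, given the prefix, the two conditional laws of the suffix are images of $\PP_h(\cdot\mid S_h,A_h)$ and $\QQ_h(\cdot\mid S_h,A_h)$ under the same Markov kernel.

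Combining the two steps gives the claimed bound. The main point of care is Step 2. Because $\pi$ is history-dependent, the standard Markov simulation lemma cannot be applied directly. The hybrid must be built on the full history space, so that the continuation kernel really is identical across neighbouring hybrids. I would formalize this by writing each trajectory law as a product of the stagewise conditional kernels. I would then bound $\frac12\sum_\tau|\nu_{h-1}(\tau)-\nu_h(\tau)|$ directly, since the trajectory space is finite: summing out the suffix leaves exactly the stage-$h$ row difference, averaged over the shared prefix.
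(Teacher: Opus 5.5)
Your proof is correct and follows essentially the same route as the paper. Both restrict the threshold to a bounded interval so that the shortfall $(b-X)^+$ lies in an interval of length $2B$. Both then build hybrid laws on the full history space that switch from $\QQ$ to $\PP$ after step $h$, and telescope using the shared prefix law and common continuation. Finally, both pass the uniform-in-$b$ bound through the supremum in Eq.~\eqref{eq:cvar}.

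The only difference is cosmetic. You first bound the CVaR gap by the total variation between the two trajectory laws and then telescope that distance. The paper instead shifts to $U_\theta=X_\theta+B\in[0,2B]$ and telescopes $\E[f_b]$ directly for each fixed threshold. Both give the same constant $2B/\alpha$.
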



Lemma~\ref{lem:simulation} allows the transition error to be separated from the reward term in Lemma~\ref{lem:envelope}. The factor $2B/\alpha$ comes from shifting centered returns into $[0,2B]$ and applying the CVaR threshold representation.

For a transition-summation failure probability $\delta_t\in(0,1)$, define the cumulative transition-width bound
\begin{equation}
\ifdefined\submissionversion
	\mathsf{Tr}(K):=2HSA+4\sqrt{2L_P}\,H\sqrt{SAK}+2H\log\frac1{\delta_t}.
\else
	\begin{aligned}
		\mathsf{Tr}(K):={}&2HSA+4\sqrt{2L_P}\,H\sqrt{SAK}\\
		&+2H\log\frac1{\delta_t}.
	\end{aligned}
\fi
\label{eq:trans}
\end{equation}

Combining this transition-width bound with the known-transition reward term $\mathsf{Rwd}(K)$ yields the following result.

\begin{theorem}[Unknown-transition regret]
	\label{thm:unknown}
	Under Assumptions~\ref{ass:model} and~\ref{ass:link} and the corruption and fresh-randomness protocol in Section~\ref{sec:model}, suppose the learner is given $K$, $\{\phi_h\}_{h=1}^H$, $\Theta$, $B$, and $\sigma$, the risk level $\alpha$, a certified derivative lower bound $\kappa$, and a valid upper bound $C$ on the realized flip budget. Let $\lambda>0$, $\chi>0$, and suppose Eq.~\eqref{eq:unknownoracle} is solved exactly with fixed measurable tie-breaking. Assume physical transitions are observed cleanly and are not corrupted by the label adversary. Let $\delta_{\rm cs},\delta_m,\delta_q,\delta_P,\delta_t\in(0,1)$ have sum less than one. Here $\delta_{\rm cs}$ and $\delta_P$ are inputs to the reward and transition confidence sets, respectively. With probability at least $1-\delta_{\rm cs}-\delta_m-\delta_q-\delta_P-\delta_t$, \textcolor{black}{Algorithm~\ref{alg:unknown-weighted}} satisfies
	\[
	\operatorname{Regret}(K) \le \min\left\{ 2BK,\; \mathsf{Rwd}(K) +\frac{2B}{\alpha}\mathsf{Tr}(K) \right\},
	\]
	where $\mathsf{Rwd}(K)$ is defined in Eq.~\eqref{eq:rwd}. This finite-time bound retains the dependence on the reward scale $B$. Under the normalized reward scale $B=O(1)$ and for $C>0$, set $\lambda=1/B^2$ and $\chi=\sqrt{G_K}/(C\sqrt\kappa)$. Then
	\ifdefined\submissionversion
	\[
	\operatorname{Regret}(K)=\Ot\!\left(\frac d\kappa\sqrt{\frac K\alpha}+\frac{dC}{\kappa\alpha}+\frac d{\kappa\alpha}+\frac{H}{\alpha}\left(SA+S\sqrt{AK}\right)\right).
	\]
	\else
	\[
	\begin{aligned}
		\operatorname{Regret}(K)
		=\Ot\!\biggl(&
		\frac d\kappa\sqrt{\frac K\alpha}
		+\frac{dC}{\kappa\alpha}+\frac d{\kappa\alpha}\\
		&+\frac{H}{\alpha}\left(SA+S\sqrt{AK}\right)
		\biggr).
	\end{aligned}
	\]
	\fi
	For $C=0$, set $w_k=1$ and replace the reward-side term by the exact clean bound in Corollary~\ref{cor:clean}. The transition term remains unchanged.
\end{theorem}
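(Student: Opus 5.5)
We decompose the one-episode regret $\Delta_k=J_{\PP^\star,\theta^\star}^\star-J_{\PP^\star,\theta^\star}(\pi_k)$ on the intersection of four events. The first is the reward confidence event of Lemma~\ref{lem:confidence}, with probability at least $1-\delta_{\rm cs}$. The second is the transition confidence event $\PP^\star\in\cP_k$ for all $k$, with probability at least $1-\delta_P$, from the Weissman union bound stated above. The third is the tail-weighted width event of Lemma~\ref{lem:width}, with probability at least $1-\delta_m-\delta_q$. The fourth is a new martingale event for the transition widths, with probability at least $1-\delta_t$. A union bound gives the stated probability, and the cap $2BK$ is trivial because $|r_{\theta,\tau_0}|\le B$ gives $\Delta_k\le 2B$.

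On these events, Proposition~\ref{prop:oracle-wellposed} gives a well-defined oracle output. Since $(\theta^\star,\PP^\star)\in\cC_k\times\cP_k$, optimism yields $J^\star_{\PP^\star,\theta^\star}\le \widetilde J_k:=\widetilde b_k-\alpha^{-1}V^{\rho_k}_{1,\widetilde\PP_k,\widetilde\theta_k}(s_1,\widetilde b_k)$. Because $\pi_k$ is induced by $(\rho_k,\widetilde b_k,\widetilde\theta_k)$ and its trajectory law under $\widetilde\PP_k$ is that of $\rho_k$ on the augmented chain, the variational form \eqref{eq:cvar} gives $\widetilde J_k\le J_{\widetilde\PP_k,\widetilde\theta_k}(\pi_k)$. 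We then write
\[
\Delta_k\le \bigl[J_{\widetilde\PP_k,\widetilde\theta_k}(\pi_k)-J_{\PP^\star,\widetilde\theta_k}(\pi_k)\bigr]+\bigl[J_{\PP^\star,\widetilde\theta_k}(\pi_k)-J_{\PP^\star,\theta^\star}(\pi_k)\bigr].
\]
The second bracket compares two reward parameters under the common true law of $\pi_k$. We bound it as in the proof of Lemma~\ref{lem:envelope}: take the envelope $q_k$ attaining \eqref{eq:dual} for the true return $r_{\theta^\star,\tau_0}$, and use $J_{\PP^\star,\widetilde\theta_k}(\pi_k)\le\E[q_k\,z^\top\widetilde\theta_k]$. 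This gives at most $\E[q_kW_k(\tau_k)\mid\mathcal F_{k-1}]$, with $0\le q_k\le1/\alpha$ and unit mean. Lemma~\ref{lem:width} then sums these terms to $\mathsf{Rwd}(K)$ exactly as in Theorem~\ref{thm:known}.

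The first bracket is bounded by Lemma~\ref{lem:simulation} with $\PP=\widetilde\PP_k$, $\QQ=\PP^\star$, and $\theta=\widetilde\theta_k\in\Theta$. This gives at most $\frac{2B}{\alpha}\sum_h\E_{\PP^\star,\pi_k}[D_{k,h}(S_h,A_h)]$. Both kernels lie in $\cP_k$, so $D_{k,h}(s,a)\le\min\{1,\mathrm{rad}^P_{k,h}(s,a)\}$. Let $X_k=\sum_h\min\{1,\mathrm{rad}^P_{k,h}(s_{k,h},a_{k,h})\}\in[0,H]$, evaluated along the realized trajectory. By fresh randomness, its $\mathcal F_{k-1}$-conditional expectation is the quantity above. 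The main technical step is converting $\sum_k\E[X_k\mid\mathcal F_{k-1}]$ into $\sum_kX_k$. We apply a Freedman-type multiplicative bound for nonnegative variables bounded by $H$, of the form $\sum_k\E[X_k\mid\mathcal F_{k-1}]\le2\sum_kX_k+2H\log(1/\delta_t)$, which produces the last term of $\mathsf{Tr}(K)$.

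The realized sum is then handled by the standard pigeonhole count. Zero-count visits contribute at most $1$ each, so at most $HSA$ in total, and the factor $2$ turns this into $2HSA$. The remaining visits give
\[
\sum_{h,s,a}\sum_{n=1}^{N_{K+1,h}(s,a)}\sqrt{2L_P/n}\le 2\sqrt{2L_P}\sum_{h}\sum_{s,a}\sqrt{N_{K+1,h}(s,a)}\le 2\sqrt{2L_P}\,H\sqrt{SAK}.
\]
The last inequality is Cauchy--Schwarz with $\sum_{s,a}N=K$ for each $h$. Doubling this gives $4\sqrt{2L_P}H\sqrt{SAK}$, so the transition total is $\frac{2B}{\alpha}\mathsf{Tr}(K)$.

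The rate follows by substituting the parameter choices from Theorem~\ref{thm:known} and using $\sqrt{L_P}=\Ot(\sqrt S)$. The clean case $C=0$ follows by replacing the reward term with Corollary~\ref{cor:clean}.

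The step most likely to cause trouble is the optimism link $\widetilde J_k\le J_{\widetilde\PP_k,\widetilde\theta_k}(\pi_k)$. It requires that the induced physical policy reproduce the augmented-state law under the optimistic pair $(\widetilde\PP_k,\widetilde\theta_k)$, not under the truth. I expect this to be handled by the remark after \eqref{eq:induced-policy}.
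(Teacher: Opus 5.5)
Your proposal is correct and follows essentially the same route as the paper: optimism over $\cC_k\times\cP_k$, a split at $J_{\PP^\star,\widetilde\theta_k}(\pi_k)$ into a reward bracket (lower-tail envelope plus Lemma~\ref{lem:width}) and a transition bracket (Lemma~\ref{lem:simulation} with $\TV\le\min\{1,\mathrm{rad}^P_{k,h}\}$). The paper then converts predictable transition widths to realized ones by the same factor-two exponential-supermartingale argument and finishes with the pigeonhole and Cauchy--Schwarz count. The optimism step you flag is resolved just as you expect: under $\widetilde\PP_k$, the internal budget of $\pi_k$, started at $\widetilde b_k$ and updated with $\widetilde\theta_k$, coincides with the augmented chain of $\rho_k$. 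Hence $\widetilde J_k$ is the threshold-$\widetilde b_k$ value of Eq.~\eqref{eq:cvar} for $J_{\widetilde\PP_k,\widetilde\theta_k}(\pi_k)$.
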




Theorem~\ref{thm:unknown} adds the cost of estimating the transition model to the known-transition reward term.  The factor $S\sqrt{AK}$ follows from the Weissman $L_1$ radius because $L_P$ contains $S\log2$.  The transition term has a $1/\alpha$ factor, as illustrated by the following example.

\begin{remark}[Why transition learning is different]
	Let a one-step return equal zero with probability $p\le\alpha$ and one otherwise.  Its CVaR is $1-p/\alpha$.  For $0\le\varepsilon\le p$, reducing the bad-state probability by $\varepsilon$ changes CVaR by exactly $\varepsilon/\alpha$.  Therefore, direct transition simulation can incur $1/\alpha$ sensitivity even when the reward-estimation term has the sharper clean dependence.  
\end{remark}
\section{Numerical Experiments}
\label{sec:experiments}

\begin{figure*}[t]
\centering
\includegraphics[width=0.90\textwidth]{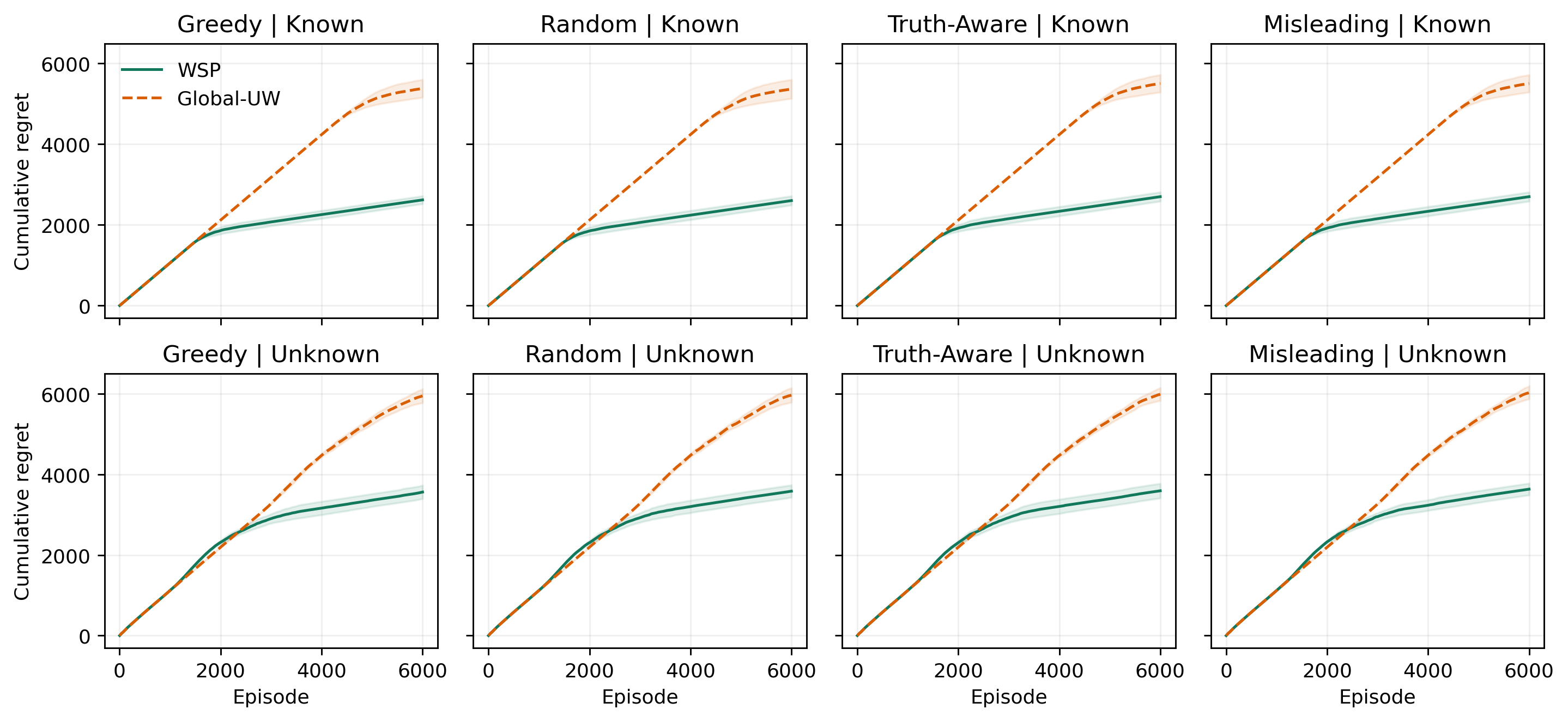}
\caption{Cumulative regret as a function of the episode index under four attacks, with $C=20$ and $\alpha=0.2$ fixed. The top and bottom rows show known and unknown transitions, respectively. Curves and shaded regions are means and 95\% confidence intervals over ten paired trials up to $K=6000$.}
\label{fig:attack-confirmation}
\end{figure*}

\begin{figure*}[t]
\centering
\includegraphics[width=0.60\textwidth]{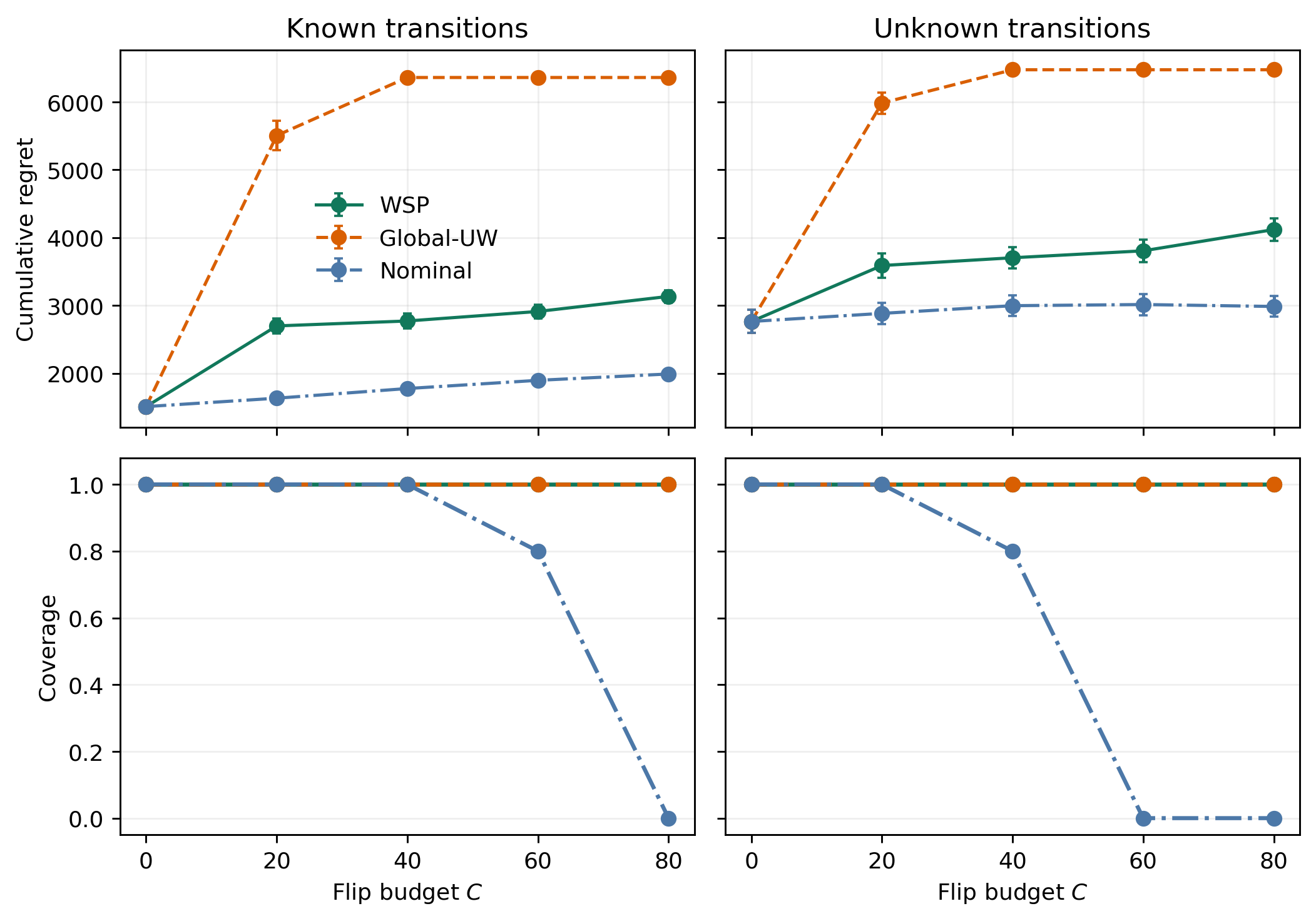}
\caption{Final cumulative regret at $K=6000$ as a function of the truth-aware flip budget $C$, with $\alpha=0.2$, over the same ten paired trials as Fig.~\ref{fig:attack-confirmation}. The top row reports regret with 95\% confidence intervals; the bottom row reports simultaneous joint coverage.}
\label{fig:budget-confirmation}
\end{figure*}

We conduct experiments to evaluate WSP-CVaR-RLHF under adversarial feedback. We adapt the synthetic attack protocol of RCDB~\cite{di2025dueling} to an $H=2$ static-CVaR MDP with additive two-dimensional rewards. At the initial state, the learner selects one of nine controllers, after which the transition kernel generates one of four terminal outcomes. The controllers have different lower-tail profiles, and the true static-CVaR optimum is separated from reward-optimistic and transition-optimistic decoys. Comparisons follow the logistic model and are corrupted by greedy, random, truth-aware, or misleading-target flips. We evaluate both a known-transition learner and an unknown-transition learner that estimates the same tabular kernel from clean transition observations.

To the best of our knowledge, no established benchmark or directly comparable online algorithm exists for static-CVaR RLHF under adversarial preference flips. We therefore compare WSP with two controlled baselines. \emph{Global-UW} sets all preference weights to one and uses the global corruption radius $C/\sqrt\lambda$, thereby isolating uncertainty weighting. The \emph{nominal} method assumes uncorrupted feedback and omits the corruption-dependent radius. All methods use the same static-CVaR planner, common random numbers, and unscaled theoretical confidence radii. A trial has simultaneous coverage if the true reward parameter remains in its confidence set throughout; in the unknown-transition case, joint coverage additionally requires all true transition rows to remain in their rectangular confidence sets.

Figure~\ref{fig:attack-confirmation} focuses on the two corruption-aware methods and compares their regret across episodes under four attacks with $C=20$ fixed. For known transitions, WSP has lower final regret than Global-UW in all ten paired trials under every attack. Its mean final regret ranges from $2602.8$ to $2699.8$, compared with $5366.7$ to $5504.6$ for Global-UW, and every paired 95\% confidence interval for the difference excludes zero. Both methods retain simultaneous coverage in every trial. As shown in the bottom row of Fig.~\ref{fig:attack-confirmation}, the same advantage persists when the transition kernel is unknown: WSP obtains mean final regret between $3558.5$ and $3630.2$, whereas Global-UW obtains between $5942.7$ and $6026.2$. On this benchmark, learning the transition kernel adds $889.7$--$977.4$ mean regret to WSP relative to its paired known-transition runs, consistent with the separate transition-estimation term in Theorem~\ref{thm:unknown}.

In contrast, Fig.~\ref{fig:budget-confirmation} fixes $K=6000$ and varies the realized flip budget. Its $C=20$ points are the endpoints of the truth-aware curves in Fig.~\ref{fig:attack-confirmation}. At $C=0$, all three methods coincide within each transition setting. For every tested $C>0$, WSP has lower regret than Global-UW in all ten paired trials for both known and unknown transitions, while both robust methods retain simultaneous coverage in all ten trials. The nominal method can have lower finite-horizon regret because its confidence set is smaller, but that set is not valid under corruption: its coverage falls to $0.8$ at $C=60$ and zero at $C=80$ for known transitions, and to $0.8$ at $C=40$ and zero for $C\ge60$ for unknown transitions. These results support the roles of uncertainty weighting and the corruption-dependent confidence radius.
\section{Conclusion}
\label{sec:conclusion}

In this work, we have developed a robust risk-sensitive RLHF framework for corrupted human feedback. We have considered additive linear rewards, one comparison per episode, and a pathwise budget of adversarial post-label flips. We have proposed WSP-CVaR-RLHF, which combines uncertainty-weighted reward estimation with optimistic augmented-state CVaR planning. Its regret analysis has established a clean $\sqrt{K/\alpha}$ term and a $C/\alpha$ corruption term. We have further extended the analysis to unknown tabular transitions, where transition estimation contributes a separate $1/\alpha$-sensitive term. Experiments under four adversarial feedback attacks have demonstrated that WSP-CVaR-RLHF consistently reduces cumulative regret relative to its unweighted robust counterpart while preserving confidence-set coverage.
%

\raggedbottom
\appendices
\section{Proofs for Known Transitions}
\label{app:known-proofs}

\subsection{Proof of Lemma~\mainref{lem:confidence}}
\label{app:confidence-proof}
\label{app:filtration}
\begin{proof}

We first formalize the filtration described in Section~\mainref{sec:model}. Conditional on $\mathcal F_{k-1}$ and the selected physical policy, let the episode-$k$ transition innovations be independent of $\mathcal M_{k-1}$ and generate the new trajectory. Define the master pre-label history
\(
\mathcal H_k^{\rm pre}=\mathcal M_{k-1}\vee\sigma_{\rm alg}(\tau_k,z_k,w_k),
\)
where $\sigma_{\rm alg}(\cdot)$ denotes the generated sigma-algebra and $\vee$ denotes the join of two sigma-algebras. The weight $w_k$ is $\mathcal H_k^{\rm pre}$-measurable and is therefore fixed before the clean label is generated. We assume
\(
L_k\mid\mathcal H_k^{\rm pre}\sim\operatorname{Ber}(\sigma(z_k^\top\theta^\star)).
\)
Let $U_k$ be a fresh private adversary seed, conditionally independent of the clean-label randomness given $\mathcal H_k^{\rm pre}$. The adversary chooses $c_k$ measurably using $\mathcal H_k^{\rm pre}\vee\sigma_{\rm alg}(L_k,U_k)$, and the next master history is
\(
\mathcal M_k=\mathcal H_k^{\rm pre}\vee\sigma_{\rm alg}(L_k,U_k,c_k,O_k).
\)

Consequently, $\mathcal M_{k-1}\subseteq\mathcal H_k^{\rm pre}\subseteq\mathcal M_k$. By the fresh-randomness convention, the trajectory law conditional on $\mathcal M_{k-1}$ is the law induced by $(\pi_k,\PP^\star)$ conditional on $\mathcal F_{k-1}$; latent past labels, flips, and adversary seeds have no additional effect. Thus the martingale arguments below use the nested master filtration, whereas the learner remains measurable with respect to $\mathcal F_{k-1}$.

The weighted score, uncertainty clipping, and clipped-score decomposition adapt the argument of Di, He, and Gu~\cite{di2025dueling} from contextual dueling bandits to trajectory-predictable comparison features.

\emph{Score decomposition.} Let
\[
\mathcal L_k(\theta) =\frac{\lambda}{2}\|\theta\|_2^2 +\sum_{i<k}w_i\bigl[\Psi(z_i^\top\theta)-O_i z_i^\top\theta\bigr].
\]
Write $\epsilon_i=L_i-\sigma(z_i^\top\theta^\star)$ and $\xi_i=O_i-L_i$.  Then the random score at the true parameter is the sum of a regularization term, a clean martingale term $\sum_{i<k}w_i\epsilon_i z_i$, and a corrupted score term $\sum_{i<k}w_i\xi_i z_i$.

\emph{Clean martingale term.} Use the nested master filtration constructed above. In particular, \(\mathcal H_i^{\rm pre}\subseteq\mathcal M_i \subseteq\mathcal H_{i+1}^{\rm pre}.\) Hence the pre-label histories form a nested label-time filtration: the $i$th design is measurable at $\mathcal H_i^{\rm pre}$, the $i$th clean noise is revealed before $\mathcal H_{i+1}^{\rm pre}$, and $\E[\epsilon_i\mid\mathcal H_i^{\rm pre}]=0$ by the fresh-label assumption. Define
\(
x_i=\sqrt{\kappa w_i}z_i, \eta_i=\sqrt{w_i/\kappa}\epsilon_i,
\), so that $x_i$ is $\mathcal H_i^{\rm pre}$-measurable and $\E[\eta_i\mid\mathcal H_i^{\rm pre}]=0$.  Since $|\epsilon_i|\le1$, $\epsilon_i$ is conditionally $1$-sub-Gaussian. Moreover, since $0\le w_i\le1$, $\eta_i$ is conditionally $1/\sqrt\kappa$-sub-Gaussian.  The rescaling also gives \(\lambda I+\sum_{i<k}x_ix_i^\top=\Sigma_k,\qquad x_i\eta_i=w_i\epsilon_i z_i.\) To match the single-index notation of the self-normalized theorem, set $\mathcal G_i=\mathcal H_{i+1}^{\rm pre}$ for $i\ge0$.  For $i\ge1$, $x_i$ is $\mathcal G_{i-1}$-measurable, $\eta_i$ is $\mathcal G_i$-measurable, and $\E[\eta_i\mid\mathcal G_{i-1}]=0$. The time-uniform vector self-normalized inequality of~\cite[Theorem~1]{abbasi2011improved}, applied to this filtration, therefore gives, simultaneously for all $k\le K$, with probability at least $1-\delta_{\rm cs}$,
\begin{align}
\left\|\sum_{i<k}w_i\epsilon_i z_i\right\|_{\Sigma_k^{-1}}
&\le\frac{1}{\sqrt\kappa}
\sqrt{2\log\frac{\det(\Sigma_k)^{1/2}}
{\det(\lambda I)^{1/2}\delta_{\rm cs}}}.
\label{eq:app-clean}
\end{align}

\emph{Corrupted score.} For the corrupted component, Eq.~\mainequation{eq:weight}, together with monotonicity of inverse covariance matrices, gives the pathwise bound
\begin{align}
\left\|\sum_{i<k}w_i\xi_i z_i\right\|_{\Sigma_k^{-1}}
&\le\sum_{i<k}c_iw_i\|z_i\|_{\Sigma_k^{-1}}\nonumber\\
&\le\sum_{i<k:c_i=1}w_i\|z_i\|_{\Sigma_i^{-1}}\nonumber\\
&\le \chi C.
\label{eq:app-corrupt}
\end{align}
The first inequality uses $|\xi_i|=|O_i-L_i|\le c_i$, and the second uses $\Sigma_k\succeq\Sigma_i$.

\emph{Parameter error.} Let $e_k=\widehat\theta_k-\theta^\star$.  Constrained first-order optimality at the minimizer, tested against the feasible point $\theta^\star$, gives $\langle\nabla\mathcal L_k(\widehat\theta_k),e_k\rangle\le0$. For every point $\theta^\star+t e_k$ on the segment between the two feasible parameters,
\[
\nabla^2\mathcal L_k(\theta^\star+t e_k) =\lambda I_d+ \sum_{i<k}w_i\sigma'(z_i^\top(\theta^\star+t e_k))z_iz_i^\top \succeq\Sigma_k,
\]
where convexity of $\Theta$ keeps the segment inside the score range covered by Assumption~\mainref{ass:link}.  Integrating this Hessian along the segment gives
\begin{align}
\|e_k\|_{\Sigma_k}^2
&\le\langle\nabla\mathcal L_k(\widehat\theta_k)
-\nabla\mathcal L_k(\theta^\star),e_k\rangle\nonumber\\
&\le-\langle\nabla\mathcal L_k(\theta^\star),e_k\rangle\nonumber\\
&\le\|\nabla\mathcal L_k(\theta^\star)\|_{\Sigma_k^{-1}}
\|e_k\|_{\Sigma_k}.
\end{align}
The score at the truth is
\[
\nabla\mathcal L_k(\theta^\star) =\lambda\theta^\star -\sum_{i<k}w_i\epsilon_i z_i -\sum_{i<k}w_i\xi_i z_i.
\]
Since $\Sigma_k\succeq\lambda I_d$, $\|\lambda\theta^\star\|_{\Sigma_k^{-1}} \le\sqrt\lambda B$.  Combining this inequality with Eqs.~\eqref{eq:app-clean} and~\eqref{eq:app-corrupt}, and then dividing by $\|e_k\|_{\Sigma_k}$ when it is nonzero, gives $\|e_k\|_{\Sigma_k}\le\beta_k$.  The self-normalized event is simultaneous in $k$, while the corruption bound is pathwise.
\end{proof}

\subsection{Proof of Lemma~\mainref{lem:envelope}}
\label{app:envelope-proof}
\begin{proof}

\emph{Optimism.} Because the horizon, state space, and action space are finite, each fixed policy induces a probability distribution on a finite trajectory set. Fix a pre-episode history on the confidence event and condition on $\mathcal F_{k-1}$. The augmented-state planner selects $(\widetilde\theta_k,\widetilde b_k,\rho_k)$ in Eq.~\mainequation{eq:knownoracle}, which induces the physical policy $\pi_k=\pi^{\rho_k,\widetilde b_k,\widetilde\theta_k}$. Since $\theta^\star\in\cC_k$, a true optimal triple $(\theta^\star,b^\star,\rho^\star)$ is feasible. Therefore, Eq.~\mainequation{eq:augmented-cvar} and optimism give
\[
\begin{aligned}
J_{\PP^\star,\theta^\star}^\star
&\le \widetilde b_k-\frac1\alpha V_{1,\PP^\star,\widetilde\theta_k}^{\rho_k}(s_1,\widetilde b_k)\le J_{\PP^\star,\widetilde\theta_k}(\pi_k).
\end{aligned}
\]
The second inequality holds because $J_{\PP^\star,\widetilde\theta_k}(\pi_k)$ re-optimizes the CVaR threshold while keeping the physical policy $\pi_k$ fixed. It remains to compare the two rewards under this same trajectory law.

\emph{Lower-tail envelope.} Let $X(\tau)=z(\tau)^\top\theta^\star$ and $Y(\tau)=z(\tau)^\top\widetilde\theta_k$ under the conditional law of $\tau_k$ induced by $(\PP^\star,\pi_k)$.  Order the distinct values in the finite support of $X$ increasingly and choose $b_k^{\rm q}$ as the first value whose cumulative conditional probability is at least $\alpha$.  Thus \(\Pp(X<b_k^{\rm q}\mid\mathcal F_{k-1}) \le\alpha\le \Pp(X\le b_k^{\rm q}\mid\mathcal F_{k-1}).\) Define
\[
q_k(\tau)=
\begin{cases}
1/\alpha,&X(\tau)<b_k^{\rm q},\\
\gamma,&X(\tau)=b_k^{\rm q},\\
0,&X(\tau)>b_k^{\rm q},
\end{cases}
\]
where, if $p_0=\Pp(X=b_k^{\rm q}\mid\mathcal F_{k-1})>0$, \(\gamma =\frac{1-\alpha^{-1}\Pp(X<b_k^{\rm q}\mid\mathcal F_{k-1})}{p_0},\) and $p_0>0$ because $b_k^{\rm q}$ is selected from the support.  On trajectories of conditional probability zero, define $q_k$ arbitrarily, say as zero.  These conventions make $b_k^{\rm q}$ and $q_k$ measurable functions of the pre-episode history.  The quantile inequalities imply $p_<:=\Pp(X<b_k^{\rm q}\mid\mathcal F_{k-1})\le\alpha$ and $\alpha-p_<\le p_0$.  Therefore \(0\le1-p_</\alpha\le p_0/\alpha,\) which proves $0\le\gamma\le1/\alpha$.  Hence $0\le q_k\le1/\alpha$ and \(\E[q_k(\tau_k)\mid\mathcal F_{k-1}]=1.\)
Moreover, $q_k$ minimizes the lower-tail risk envelope
\[
\CVaR_\alpha(X) =\inf_{0\le q\le1/\alpha,\ \E[q\mid\mathcal F_{k-1}]=1} \E[qX\mid\mathcal F_{k-1}],
\]
because it assigns the maximal allowed density to the smallest conditional values of $X$ and uses the boundary mass only as needed to make total mass one. For completeness, if an admissible envelope assigns positive mass to a larger support value while a smaller value has density below $1/\alpha$, transferring an equal amount of probability-weighted density from the larger value to the smaller one preserves $\E[q]=1$ and cannot increase $\E[qX]$.  Repeating this finite exchange produces exactly the displayed $q_k$, including its partial mass at the atom $b_k^{\rm q}$.  This proves optimality without assuming a continuous return distribution.

\emph{Dual representation.} For admissible $q$, the dual form gives $\CVaR_\alpha(Y)\le\E[qY\mid\mathcal F_{k-1}]$. Using the minimizing envelope above for $X$, \(\CVaR_\alpha(Y)-\CVaR_\alpha(X) \le \E[q_k(Y-X)\mid\mathcal F_{k-1}].\) Combining this inequality with optimism yields
\[
\Delta_k \le \E[q_k z(\tau_k)^\top(\widetilde\theta_k-\theta^\star) \mid\mathcal F_{k-1}] \le \E[q_kW_k(\tau_k)\mid\mathcal F_{k-1}],
\]
since both $\widetilde\theta_k$ and $\theta^\star$ lie in $\cC_k$. Conditional on $\mathcal M_{k-1}$, the fresh trajectory has the same law as conditional on $\mathcal F_{k-1}$: the policy is selected from the learner history, and latent past labels and adversary randomness do not enter the physical dynamics.  Hence the two conditional expectations above are unchanged when $\mathcal F_{k-1}$ is replaced by $\mathcal M_{k-1}$, and $\E[q_k\mid\mathcal M_{k-1}]=1$.
\end{proof}

\subsection{Proof of Lemma~\mainref{lem:width}}
\label{app:width-proof}
\begin{proof}

\emph{Stopped processes.} For concentration, extend the processes beyond the confidence event as follows.  At the first pre-episode history for which $\theta^\star\notin\cC_k$, stop the width process and set $W_k=0$, $Q_k=1$, and $Y_k=0$ thereafter.  Before stopping, use the fixed, measurably tie-broken envelope from Lemma~\mainref{lem:envelope}.  This makes the following processes defined on all histories.  On the simultaneous confidence event, the stopping rule never fires.  Because $\cC_k$ is learner-history measurable, the stopping decision is also $\mathcal M_{k-1}$-measurable.

Set
\[
\begin{aligned}
Q_k&=q_k(\tau_k),\qquad Y_k=Q_kW_k(\tau_k),\\
\nu_k&=\E[Y_k\mid\mathcal M_{k-1}],\qquad R=2B.
\end{aligned}
\]

\emph{Conditional-to-realized conversion.} Because $0\le Y_k\le L:=R/\alpha$, convexity gives, for $t=1/(2L)$,
\[
\begin{aligned}
\E[e^{t(\nu_k-2Y_k)}\mid\mathcal M_{k-1}]
&\le e^{t\nu_k}
\left(1-\frac{1-e^{-2t L}}{L}\nu_k\right)\\
&\le\exp\!\left(
\left[t-\frac{1-e^{-2t L}}{L}\right]\nu_k
\right)\le1.
\end{aligned}
\]
The last inequality uses $1/2\le1-e^{-1}$.  Thus
\[
\exp\!\left( \frac{1}{2L}\sum_{k=1}^n(\nu_k-2Y_k) \right),\qquad n\le K,
\]
is a nonnegative supermartingale.  Markov's inequality yields
\begin{equation}
\sum_{k=1}^K\nu_k
\le2\sum_{k=1}^KY_k+\frac{2R}{\alpha}\log\frac1{\delta_m}.
\label{eq:app-fixed-tilt}
\end{equation}

\emph{Realized lower-tail mass.} Likewise, $\E[Q_k\mid\mathcal M_{k-1}]=1$, $Q_k-1\le1/\alpha$, and, since $0\le Q_k\le1/\alpha$, \(\operatorname{Var}(Q_k\mid\mathcal M_{k-1}) \le \E[Q_k^2\mid\mathcal M_{k-1}]\le \frac1\alpha\E[Q_k\mid\mathcal M_{k-1}]=\frac1\alpha.\) The one-sided Freedman inequality~\cite{freedman1975tail} for adapted martingale-difference increments $D_k$,
\[
\Pp\!\left( \sum_{k=1}^KD_k\ge \sqrt{2vx}+\frac{2bx}{3} \right)\le e^{-x},
\]
is valid when the predictable quadratic variation is at most $v$ and the increments are at most $b$.  It follows from the standard exponent $\exp\{-t^2/[2(v+bt/3)]\}$ because $t=\sqrt{2vx}+2bx/3$ satisfies $t^2\ge2x(v+bt/3)$. Applying it with $v=K/\alpha$, $b=1/\alpha$, and $x=\log(1/\delta_q)$ gives
\[
\sum_{k=1}^K(Q_k-1) \le \sqrt{\frac{2K\log(1/\delta_q)}{\alpha}} +\frac{2\log(1/\delta_q)}{3\alpha}
\]
with probability at least $1-\delta_q$.  Equivalently,
\begin{equation}
\sum_{k=1}^KQ_k\le S_Q
\label{eq:app-envelope-mass}
\end{equation}
on that event.

\emph{Elliptical width sum.} The confidence-set diameter satisfies \(W_k(\tau_k)\le\min\{R,2\overline\beta u_k\}.\) Partition episodes into the unclipped and clipped sets $\mathcal I_{\rm u}=\{k:w_k=1\}$ and $\mathcal I_{\rm c}=\{k:w_k<1\}$, respectively.  For $k\in\mathcal I_{\rm u}$, \(W_k(\tau_k) \le M_1\min\{1,\sqrt\kappa u_k\}.\) If $\sqrt\kappa u_k\ge1$, the right-hand side is at least $R$. Otherwise, it is at least $2\overline\beta u_k$.  Therefore, the inequality covers both terms in $\min\{R,2\overline\beta u_k\}$. Weighted Cauchy--Schwarz, $Q_k\le1/\alpha$, and the elliptical-potential lemma then give
\[
\begin{aligned}
\sum_{k\in\mathcal I_{\rm u}}Q_kW_k(\tau_k)
&\le M_1\sqrt{
\left(\sum_kQ_k\right)
\left(\frac1\alpha\sum_{k\in\mathcal I_{\rm u}}
\min\{1,\kappa u_k^2\}\right)}\\
&\le M_1\sqrt{\frac{2S_QG_K}{\alpha}}.
\end{aligned}
\]
The last inequality uses the determinant update \(\frac{\det(\Sigma_{k+1})}{\det(\Sigma_k)} =1+\kappa w_ku_k^2\) and the elliptical-potential bound~\cite[Lemma~11]{abbasi2011improved}
\[
\sum_{k=1}^K \min\{1,\kappa w_ku_k^2\} \le 2\log\frac{\det(\Sigma_{K+1})}{\det(\lambda I)} \le2G_K.
\]
The first inequality follows termwise from $\min\{1,u\}\le2\log(1+u)$ and the determinant update. The second follows from the eigenvalue arithmetic--geometric mean bound proved below. On $\mathcal I_{\rm u}$, $w_k=1$. For $k\in\mathcal I_{\rm c}$, $w_k=\chi/u_k$ and \(W_k(\tau_k) \le M_2\min\{1,\kappa w_ku_k^2\}.\) Here $w_k=\chi/u_k$.  If $\kappa\chi u_k\ge1$, the right-hand side is at least $R$. Otherwise, it is at least $(2\overline\beta/(\chi\kappa))\kappa\chi u_k =2\overline\beta u_k$. Therefore, \(\sum_{k\in\mathcal I_{\rm c}}Q_kW_k(\tau_k) \le\frac{2G_KM_2}{\alpha}.\)

Adding the unclipped and clipped sums and substituting the result into Eq.~\eqref{eq:app-fixed-tilt} proves the claim.
\end{proof}

\subsection{Proof of Theorem~\mainref{thm:known}}
\label{app:known-theorem-proof}
\begin{proof}

On the confidence event, Lemmas~\mainref{lem:envelope} and~\mainref{lem:width} give the $\mathsf{Rwd}(K)$ part of the theorem.  The bound $|z(\tau)^\top\theta^\star|\le B$ separately gives $\operatorname{Regret}(K)\le2BK$, yielding the minimum of the two bounds. It remains to verify the $\widetilde O$ display without hiding polynomial factors.

Because $\|z_k\|_2\le1$ and $0\le w_k\le1$, \(\operatorname{tr}(\Sigma_{K+1})\le \lambda d+\kappa K.\) The arithmetic--geometric mean inequality for the eigenvalues gives \(\log\frac{\det(\Sigma_{K+1})}{\det(\lambda I_d)} \le d\log\left(1+\frac{\kappa K}{\lambda d}\right)=G_K.\) Since $\Sigma_k\preceq\Sigma_{K+1}$, Eq.~\mainequation{eq:radius} implies \(\overline\beta \le\sqrt\lambda B+\chi C+ \frac1{\sqrt\kappa} \sqrt{G_K+2\log(1/\delta_{\rm cs})},\) which is Eq.~\mainequation{eq:beta-deterministic}.

Writing $\ell_q=\log(1/\delta_q)$ and using $\sqrt{2K\ell_q/\alpha}\le K/2+\ell_q/\alpha$ gives \(S_Q\le2K+\frac{2\ell_q}{\alpha},\qquad \sqrt{S_Q}\le\sqrt{2K}+\sqrt{\frac{2\ell_q}{\alpha}}.\)

For $C>0$, set $\lambda=1/B^2$ and $\chi=\sqrt{G_K}/(C\sqrt\kappa)$.  Then \(\overline\beta =\widetilde O\left( 1+\frac{\sqrt{G_K}+1}{\sqrt\kappa}\right),\) and hence
\[
\begin{aligned}
M_1&=\widetilde O\left(
B+\frac{\sqrt{G_K}+1}{\kappa}\right),\\
G_KM_2&=\widetilde O\left(
BG_K+\frac{C(G_K+1)}{\kappa}\right).
\end{aligned}
\]
Here we used $\sqrt{G_K}\le G_K+1$ and $\kappa\le1$.  Substituting these bounds and the estimate for $\sqrt{S_Q}$ into Eq.~\mainequation{eq:rwd}, then using $G_K=\widetilde O(d)$, gives
\[
\mathsf{Rwd}(K) =\widetilde O\left( \frac d\kappa\sqrt{\frac K\alpha} +\frac{dC}{\kappa\alpha}+\frac d{\kappa\alpha} +\frac B\alpha \right).
\]
\end{proof}

\subsection{Proof of Corollary~\mainref{cor:clean}}
\label{app:clean-proof}
\begin{proof}
When $C=0$, Eq.~\mainequation{eq:weight} sets $w_k=1$ for every episode.  Therefore the clipped set $\mathcal I_{\rm c}$ in the preceding proof is empty.  Repeating the unclipped Cauchy--Schwarz calculation with $\beta^0$ and $M_0$ gives \(\sum_{k=1}^KY_k \le M_0\sqrt{\frac{2S_QG_K}{\alpha}}.\) Equation~\eqref{eq:app-fixed-tilt} proves the exact display in Corollary~\mainref{cor:clean}.  With $\lambda=1/B^2$, \(\beta^0=\widetilde O\left( 1+\frac{\sqrt{G_K}+1}{\sqrt\kappa}\right),\) and the same substitutions, now without $M_2$, give
\[
\operatorname{Regret}(K) =\widetilde O\left( \frac d\kappa\sqrt{\frac K\alpha} +\frac d{\kappa\alpha}+\frac B\alpha \right).
\]
\end{proof}
\section{Proofs for Unknown Transitions}
\label{app:unknown-proof}

\subsection{Proof of Proposition~\mainref{prop:oracle-wellposed}}
\label{app:oracle-wellposed}
\begin{proof}

For fixed $(\theta,\PP)$, define the optimal augmented-state shortfall recursively by \(G_{H+1}^{\theta,\PP}(s,b)=b^+,\) and, for $h=H,\ldots,1$,
\begin{equation}
G_h^{\theta,\PP}(s,b)
=\min_{a\in\mathcal A}\sum_{s'\in\mathcal S}
\PP_h(s'\mid s,a)
G_{h+1}^{\theta,\PP}
\!\left(s',b-r_{\theta,h}^{\rm ctr}(s,a)\right).
\label{eq:oracle-dp}
\end{equation}
Finite-action backward induction gives $G_1^{\theta,\PP}(s_1,b)=\min_{\rho\in\Pi^{\rm Aug}}V_{1,\PP,\theta}^{\rho}(s_1,b)$. The terminal function is continuous in $b$. If $G_{h+1}^{\theta,\PP}$ is jointly continuous in $(\theta,b,\PP)$, then each action value in Eq.~\eqref{eq:oracle-dp} is continuous because the reward shift is linear in $\theta$ and the next-state sum is finite. The minimum over the finite action set preserves continuity. Induction therefore proves that $G_1^{\theta,\PP}(s_1,b)$ is continuous in $(\theta,b,\PP)$.

The observable histories form a standard Borel space. The estimator is uniquely defined by its strongly convex objective on the compact convex set $\Theta$ and is measurable in the observed history. Consequently, $\cC_k$ is a nonempty measurable compact correspondence because it is the intersection of compact $\Theta$ with a closed ellipsoid whose center and shape matrix are history measurable. The rectangular set $\cP_k$ is likewise a nonempty measurable compact correspondence of the empirical transition rows and visit counts; both correspondences have measurable graphs because their defining continuous inequalities have history-measurable coefficients.

After optimizing out $\rho$ through Eq.~\eqref{eq:oracle-dp}, the known- and unknown-transition objectives are Carath\'eodory functions: they are measurable in the learner history and continuous in the finite-dimensional decision variables. Continuity of the objective and compactness of the feasible set ensure that the maximum is attained. The measurable maximum theorem~\cite[Theorem~18.19]{aliprantis2006infinite} then yields a learner-history-measurable maximizing selector for $(\theta,b)$ in Eq.~\mainequation{eq:knownoracle} and for $(\theta,b,\PP)$ in Eq.~\mainequation{eq:unknownoracle}. Finally, selecting the first minimizing action in a fixed ordering of $\mathcal A$ in Eq.~\eqref{eq:oracle-dp} gives Borel-measurable augmented-state decision rules. Their induced physical policies are therefore learner-history measurable. The known-transition oracle follows as the special case $\PP=\PP^\star$.
\end{proof}

\subsection{Simultaneous Transition Confidence}

For each $h\in[H]$ and each $(s,a)$, pre-sample an infinite i.i.d. stack from $\PP_h^\star(\cdot\mid s,a)$ and reveal its $n$th element on the $n$th visit. The episode index of that visit may be adaptive, but the stack index is not. For every deterministic $n\ge1$, let $p$ denote the true next-state row, $\widehat p_n$ its empirical distribution after $n$ samples, and $\mathcal E_{n,h,s,a}$ the event that $\widehat p_n$ is farther than $\sqrt{2L_P/n}$ from $p$ in $L_1$ distance.  The empirical-distribution $L_1$ deviation bound of Weissman \emph{et al.}~\cite{weissman2003l1}, \(\Pp\!\left(\|\widehat p_n-p\|_1\ge\varepsilon\right) \le(2^S-2)e^{-n\varepsilon^2/2},\) with $\varepsilon=\sqrt{2L_P/n}$ gives \(\Pp(\mathcal E_{n,h,s,a}) \le\frac{\delta_P}{HSAK}.\)
Here $L_P$ is defined in the unknown-transition section. A union bound over all $n\le K$ and all $(h,s,a)$ proves simultaneous inclusion for every positive visit count. When the count is zero, the radius is two, so the $L_1$ ball around the fixed uniform empirical row contains every probability row. Hence $\PP^\star\in\cP_k$ simultaneously for every $k$.

\subsection{Proof of Lemma~\mainref{lem:simulation}}
\begin{proof}
Write $X_\theta(\tau)=z(\tau)^\top\theta\in[-B,B]$ and $U_\theta=X_\theta+B\in[0,2B]$.  Translation equivariance reduces the desired bound to $U_\theta$.  For a random variable in $[0,2B]$, the maximization in Eq.~\mainequation{eq:cvar} may be restricted to $b\in[0,2B]$: a threshold below zero is dominated by $b=0$, while for $b>2B$ the objective is nonincreasing in $b$ when $\alpha<1$ and constant when $\alpha=1$.

Fix $b\in[0,2B]$ and let $f_b(\tau)=(b-U_\theta(\tau))^+\in[0,2B]$. Regard a fixed physical policy as the collection of conditional action kernels it induces on observed prefixes. For $0\le j\le H$, define the hybrid law $\mathbb M^j$ to use $\QQ$ at steps $1,\ldots,j$ and $\PP$ at steps $j+1,\ldots,H$, with the same fixed initial state and physical policy in every hybrid. Thus $\mathbb M^0$ is the $\PP$-trajectory law and $\mathbb M^H$ is the $\QQ$-trajectory law.

For each $h\in[H]$, the adjacent laws $\mathbb M^{h-1}$ and $\mathbb M^h$ have the same distribution over the prefix through $(S_h,A_h)$. This is the law obtained using the $\QQ$-kernel through step $h-1$. Conditional on a fixed prefix $\mathfrak p_h=(S_1,A_1,\ldots,S_h,A_h)$, define \(g_{b,h}^{\mathfrak p_h}(s') =\E[f_b(\tau)\mid \mathfrak p_h,S_{h+1}=s'],\) where the expectation uses the common continuation policy and the common $\PP$-kernel from step $h+1$ onward.  Since $0\le g_{b,h}^{\mathfrak p_h}\le2B$, the variational definition of total variation gives
\[
\left| \E_{\PP_h(\cdot\mid S_h,A_h)}g_{b,h}^{\mathfrak p_h} -\E_{\QQ_h(\cdot\mid S_h,A_h)}g_{b,h}^{\mathfrak p_h} \right| \le2B D_h(S_h,A_h).
\]
Average over the common prefix law and telescope from $h=1$ to $H$ to obtain \(|\E_{\PP,\pi}f_b-\E_{\QQ,\pi}f_b| \le2B\sum_{h=1}^{H}\E_{\QQ,\pi}[D_h(S_h,A_h)].\) The right-hand side is uniform in $b$.  Multiplying by $1/\alpha$ in Eq.~\mainequation{eq:cvar} and using $|\max_bF(b)-\max_bG(b)|\le\sup_b|F(b)-G(b)|$ proves the claim.
\end{proof}

\subsection{Proof of Theorem~\mainref{thm:unknown}}
\begin{proof}
Consider the simultaneous reward and transition confidence event established above. On this event, $(\theta^\star,b^\star,\rho^\star,\PP^\star)$ is feasible in Eq.~\mainequation{eq:unknownoracle}. The selected triple induces the physical policy $\pi_k=\pi^{\rho_k,\widetilde b_k,\widetilde\theta_k}$, which is fixed by the pre-episode history. Therefore, optimism gives
\[
\begin{aligned}
J_{\PP^\star,\theta^\star}^\star
&\le \widetilde b_k-\frac1\alpha V_{1,\widetilde\PP_k,\widetilde\theta_k}^{\rho_k}(s_1,\widetilde b_k)\le J_{\widetilde{\PP}_k,\widetilde\theta_k}(\pi_k).
\end{aligned}
\]
The second inequality re-optimizes the CVaR threshold for the fixed physical policy $\pi_k$ under $(\widetilde\PP_k,\widetilde\theta_k)$.
Subtract $J_{\PP^\star,\theta^\star}(\pi_k)$ and split at $J_{\PP^\star,\widetilde\theta_k}(\pi_k)$:
\[
\begin{aligned}
\Delta_k
&\le J_{\widetilde{\PP}_k,\widetilde\theta_k}(\pi_k)
-J_{\PP^\star,\theta^\star}(\pi_k)\\
&=\left[J_{\PP^\star,\widetilde\theta_k}(\pi_k)
-J_{\PP^\star,\theta^\star}(\pi_k)\right]\\
&\quad+\left[J_{\widetilde{\PP}_k,\widetilde\theta_k}(\pi_k)
-J_{\PP^\star,\widetilde\theta_k}(\pi_k)\right].
\end{aligned}
\]
For the first bracket, construct the lower-tail envelope $q_k$ for the true return under $(\PP^\star,\pi_k)$.  The lower-tail-envelope and dual-representation arguments in the proof of Lemma~\mainref{lem:envelope} give \(J_{\PP^\star,\widetilde\theta_k}(\pi_k) -J_{\PP^\star,\theta^\star}(\pi_k) \le \E[q_kW_k\mid\mathcal M_{k-1}].\) Therefore, the remaining summation is the tail-weighted width bound of Lemma~\mainref{lem:width}.  For the second bracket, apply Lemma~\mainref{lem:simulation} with $\PP=\widetilde{\PP}_k$ and $\QQ=\PP^\star$.  Its expectation is then taken under the true realized-episode law generated by $\PP^\star$ and $\pi_k$.  The row-wise TV term satisfies
\[
\begin{aligned}
&\TV(\widetilde{\PP}_{k,h}(\cdot\mid s,a),\PP_h^\star(\cdot\mid s,a))\\
&\quad\le\frac12\|\widetilde{\PP}_{k,h}(\cdot\mid s,a)
-\PP_h^\star(\cdot\mid s,a)\|_1\\
&\quad\le\min\{1,\mathrm{rad}_{k,h}^{P}(s,a)\}.
\end{aligned}
\]
The final inequality uses that both $\PP^\star$ and $\widetilde{\PP}_k$ lie in the same rectangular confidence set, so their $L_1$ distance is at most $2\mathrm{rad}_{k,h}^{P}(s,a)$, and total variation is one half of $L_1$ distance.

\emph{Transition-width sum.} Let $(S_{k,h},A_{k,h})$ denote the realized state-action pair at stage $h$ of episode $k$.  Set \(\Gamma_k^{\rm tr}=\sum_{h=1}^{H} \min\{1,\mathrm{rad}_{k,h}^{P}(S_{k,h},A_{k,h})\}.\) Define its predictable mean
\[
\begin{aligned}
\mu_k^{\rm tr}&=\E[\Gamma_k^{\rm tr}\mid\mathcal M_{k-1}]\\
&=\sum_{h=1}^{H}\E_{\PP^\star,\pi_k}\!\left[
\min\{1,\mathrm{rad}_{k,h}^{P}(S_h,A_h)\}
\mid\mathcal M_{k-1}\right].
\end{aligned}
\]
The equality follows because $\pi_k$ and every radius are fixed by the pre-episode learner history, while the new trajectory follows the fresh true kernel.  Consequently, the conditional simulation contribution is at most $(2B/\alpha)\mu_k^{\rm tr}$. Since $0\le \Gamma_k^{\rm tr}\le H$, the same fixed-tilt exponential-supermartingale argument as in Eq.~\eqref{eq:app-fixed-tilt} gives \(\sum_{k=1}^K\mu_k^{\rm tr} \le2\sum_{k=1}^K\Gamma_k^{\rm tr}+2H\log\frac1{\delta_t}.\) Pathwise grouping by $(h,s,a)$ and $\sum_{n=1}^Nn^{-1/2}\le2\sqrt N$ yield the following calculation.  If a row is visited $N_{h,s,a}$ times, its first visit contributes at most one and the remaining visits contribute at most $2\sqrt{2L_PN_{h,s,a}}$.  Since $\sum_{h,s,a}N_{h,s,a}=HK$, Cauchy--Schwarz gives
\[
\begin{aligned}
\sum_{k=1}^K\Gamma_k^{\rm tr}
&\le HSA+2\sqrt{2L_P}
\sum_{h,s,a}\sqrt{N_{h,s,a}}\\
&\le HSA+2\sqrt{2L_P}\,H\sqrt{SAK}.
\end{aligned}
\]

By Lemmas~\mainref{lem:envelope} and~\mainref{lem:width}, the reward difference contributes the same reward-side term as Theorem~\mainref{thm:known}. Lemma~\mainref{lem:simulation} and the transition-width calculation contribute $(2B/\alpha)\mathsf{Tr}(K)$. A union bound over the reward, envelope-mass, martingale-conversion, transition-confidence, and transition-summation events proves the stated finite-time bound. Finally, because $L_P=S\log2+\log(2HSAK/\delta_P)$, \(\sqrt{L_P}\sqrt{SAK} =\widetilde O\left(S\sqrt{AK}\right).\) Consequently, the exact transition contribution simplifies only to \(\frac{BH}{\alpha}\, \widetilde O\left(SA+S\sqrt{AK}\right),\) which proves the state dependence displayed in Theorem~\mainref{thm:unknown}.
\end{proof}

\bibliographystyle{IEEEtran}
\bibliography{IEEE}

\end{document}